\renewcommand\bibentry[1]{\nocite{#1}{\frenchspacing\@nameuse{BR@r@#1\@extra@b@citeb}}}
\newcommand{\hmm}{\textbf{User: }}     
\newcommand{\assi}{\textbf{Assistant: }}
\newtcolorbox[auto counter, number within=section, list type=subsubsection, list inside=toc]{sectionbox}[2][]{
colback=white!98!gray, colframe=black, 
colbacktitle=white!90!gray, coltitle=black, 
fonttitle=\bfseries,
title={#2}, 
list entry={Table \thetcbcounter\quad}
}
\definecolor{darkorange}{RGB}{255, 140, 0}
\definecolor{darkblue}{RGB}{84, 112, 198}
\definecolor{lightgreen}{RGB}{145, 204, 117}
\definecolor{lightyellow}{RGB}{250, 200, 88}
\definecolor{lightred}{RGB}{238, 102, 102}
\definecolor{lightblue}{RGB}{115, 192, 222}
\newtcolorbox{promptbox}[2][Prompt]{
colback=black!5!white,
arc=5pt,
boxrule=0.5pt,
fonttitle=\bfseries,
title=#1,
before upper={\small}, fontupper=\fontfamily{ptm}\selectfont,
colframe=#2,
}
\definecolor{codegreen}{rgb}{0,0.6,0}
\definecolor{codegray}{rgb}{0.5,0.5,0.5}
\definecolor{codepurple}{rgb}{0.58,0,0.82}
\definecolor{backcolour}{rgb}{0.95,0.95,0.92}
\definecolor{bg}{gray}{0.95}
\newcommand{\up}[1]{\textcolor{OliveGreen}{\small \ $\uparrow${#1}}}
\newcommand{\down}[1]{\textcolor{Maroon}{\small \ $\downarrow${#1}}}
\def\1{\bm{1}}
\def\##1\#{\begin{align}#1\end{align}}
\def\$#1\${\begin{align*}#1\end{align*}}
\let\tilde\widetilde
\def\TV{\mathrm{TV}}
\newcommand{\cA}{\mathcal{A}}
\newcommand{\cD}{\mathcal{D}}
\newcommand{\cF}{\mathcal{F}}
\newcommand{\cM}{\mathcal{M}}
\newcommand{\cP}{\mathcal{P}}
\newcommand{\cS}{{\mathcal{S}}}
\newcommand{\cU}{\mathcal{U}}
\newcommand{\cX}{\mathcal{X}}
\newcommand{\E}{\mathbb{E}}
\newcommand{\NN}{\mathbb{N}}
\newcommand{\PP}{\mathbb{P}}
\newcommand{\reff}{\mathrm{ref}}
\definecolor{red1}{HTML}{f47983}
\definecolor{blue1}{HTML}{3eede7}
\definecolor{yellow1}{HTML}{f5dd6f}
\newcommand{\argmax}{\mathop{\mathrm{argmax}}}
\newtheorem{theorem}{Theorem}
\newtheorem{example}{Example}
\newtheorem{proposition}{Proposition}
\newtheorem{lemma}{Lemma}
\newtheorem{definition}{Definition}
\newcommand{\R}{\mathbb{R}}
\newcommand{\KL}{D_{\mathrm{KL}}}
\title{Building Math Agents with Multi-Turn Iterative Preference Learning}
\keywords{RLHF, Agent learning, Mathematical reasoning}
\renewcommand{\today}
\author[1,*]{Wei Xiong}
\author[2]{Chengshuai Shi}
\author[3]{Jiaming Shen}
\author[4]{Aviv Rosenberg}
\author[3]{Zhen Qin}
\author[3]{Daniele Calandriello}
\author[3]{Misha Khalman}
\author[3]{Rishabh Joshi}
\author[3]{Bilal Piot}
\author[3]{Mohammad Saleh}
\author[5]{Chi Jin}
\author[1]{Tong Zhang}
\author[3]{Tianqi Liu}
\affil[1]{University of Illinois Urbana-Champaign}
\affil[2]{University of Virginia}
\affil[3]{Google Deepmind}
\affil[4]{Google Research}
\affil[5]{Princeton University}
\begin{abstract}
Recent studies have demonstrated the potential to enhance the mathematical problem-solving capabilities of large language models (LLMs) by integrating external tools such as code interpreters and employing multi-turn Chain-of-Thought (CoT) reasoning. While existing approaches primarily focus on synthetic data generation and Supervised Fine-Tuning (SFT), this paper explores complementary preference learning to further improve model performance. However, existing direct preference learning algorithms are originally designed for the single-turn chat task, and do not fully address the complexities of multi-turn reasoning and external tool integration required for tool-integrated mathematical reasoning tasks. To fill in this gap, we introduce a multi-turn online iterative direct preference learning framework tailored to this unique context, which incorporates feedback from code interpreters and optimizes trajectory-level preferences. The effectiveness of our framework is validated through training of various language models using an augmented prompt set derived from GSM8K and MATH datasets. Our results show significant improvements even with only final result checking: for instance, the performance of a supervised fine-tuned Gemma-1.1-it-7B model increased from 77.5\% to 83.9\% on GSM8K and from 46.1\% to 51.2\% on MATH. Similarly, a Gemma-2-it-9B model improved from 84.1\% to 86.3\% on GSM8K and from 51.0\% to 54.5\% on MATH.
\end{abstract}
\begin{document}

\maketitle


\section{Introduction} \label{sec:intro}

Large language models (LLMs) have demonstrated remarkable capacities across a variety of language tasks, showcasing their broad-ranging capabilities in natural language processing. Notable models include ChatGPT \citep{OpenAI2023GPT4TR}, Claude \citep{Anthropic@claude}, and Gemini \citep{team2023gemini}. However, despite these advances, even the most advanced closed-source LLMs still struggle with complex reasoning tasks that require multi-rounds of decision making. In particular, for the representative task of mathematical problem solving, LLMs often fail with basic arithmetic and symbolic computations \citep{hendrycks2021measuring, cobbe2021training, zheng2021minif2f}. To address this issue, recent studies recommend the integration of external tools (e.g., calculators, computational Python libraries and symbolic solvers) to augment the LLMs' mathematical problem-solving capabilities \citep{cobbe2021training, shao2022chaining, mishra2022lila, zhang2024evaluating}. Specifically, by integrating natural language reasoning with the use of these external tools, these enhanced LLMs can receive external messages from tool interactions and reason based on both previously generated tokens and external messages, which significantly improves their performance in mathematical tasks \citep{gou2023tora, toshniwal2024openmathinstruct, shao2024deepseekmath}.

    These successes of tool-integrated LLMs lead to a natural research question: how can we better train LLMs to combine tool usage with intrinsic reasoning to tackle complex reasoning tasks? For the mathematical problem solving task, existing works primarily focus on synthetic data generation (by a strong teacher model) and supervised fine-tuning (SFT), as seen in ToRA \citep{gou2023tora}, MetaMathQA \citep{yu2023metamath}, MAmmoTH \citep{yue2023mammoth, yue2024mammoth2}, and Open-MathInstruct \citep{toshniwal2024openmathinstruct}. These methods and synthetic datasets have yielded significant improvements in test accuracy on standard benchmarks like MATH \citep{hendrycks2021measuring} and GSM8K \citep{cobbe2021gsm8k}. 

Building on strong SFT models, \textit{Reinforcement Learning from Human Feedback} (RLHF) has proven to be a key technique to elicit LLMs' knowledge during the post-training stage and has become a standard practice in the LLM training pipeline \citep{bai2022training, ouyang2022training, touvron2023llama, team2023gemini}. Broadly speaking, the RLHF learning paradigm, which was originally designed for aligning large language models (LLMs) with human values and preferences \citep{bai2022training, ouyang2022training},  is distinct from SFT as it learns from \textit{relative feedback} \citep{christiano2017deep, ziegler2019fine}. It has notably enhanced the capabilities of models like ChatGPT, Claude, and Gemini, enabling them to generate responses that are more helpful, harmless, and honest \citep{bai2022training}. Inspired by RLHF's success in general chat applications, in this paper, we explore RLHF for improving LLMs' mathematical problem-solving abilities when equipped with external tools.

In particular, since deep RL methods (e.g., the proximal policy optimization, PPO algorithm \citep{schulman2017proximal}) are often sample inefficient and unstable \citep{choshen2019weaknesses}, our goal is to derive direct preference learning algorithms that directly learn from the preference dataset \citep{zhao2023slic, rafailov2023direct, azar2023general}. We begin by formulating the learning process as a Markov decision process (MDP), distinct from the contextual bandit approach typically used in RLHF for making general chatbots without external environment interactions \citep{xiong2024iterative, rafailov2023direct}. Then, we derive the optimality condition of the optimization problem and develop multi-turn variants of direct preference learning algorithms that incorporate external messages, where the primary modification is to mask out irrelevant tokens during training. Furthermore, we extend our approach to its online iterative variants, which recent works demonstrated to be promising \citep{xiong2024iterative, guo2024direct}. 

We evaluate our approach through case studies using augmented training sets from MATH and GSM8K benchmarks, employing various base models such as Gemma \citep{team2024gemma}, CodeGemma \citep{team2024codegemma}, and Mistral \citep{jiang2023mistral}. For instance, the performance of a supervised fine-tuned Gemma-1.1-it-7B model increased from 77.5\% to 83.9\% on GSM8K and from 46.1\% to 51.2\% on MATH. Similarly, a Gemma-2-it-9B model improved from 84.1\% to 86.3\% on GSM8K and from 51.0\% to 54.5\% on MATH. These empirical results indicate a significant improvement in performance over standard SFT models, demonstrating the potential of RLHF in complex reasoning task. We also provide a comprehensive recipe for the practical implementation of our online iterative multi-turn methods, and make our models, datasets, and code publicly available for further research and development.

\subsection{Problem Formulation} \label{sec:formulation}
We denote prompt as $x \in \cX$ and assume that the interactions run for up to $H$ rounds. At the first step, a prompt $x$ is sampled from some distribution $d_0$ as the initial state $s_1$ (We use the terminology ``state'' instead of ``context'' because we are concerning about an MDP instead of a contextual bandit here). Then, at each step $h \in [H]$, 
\begin{itemize}
    \item \textbf{Action:} the agent observes the current state $s_h$, which is the history of the first $h-1$ interactions with the external environment, and takes an action $a_h$ according to some policy $\pi_h(\cdot|s_h) \in \Delta(\cA)$. Typically, the action is in the ReAct manner, which consist of a reasoning step $f_h$ and an execution step $e_h$ (e.g., writing python code) \citep{yao2022react}.
    \item \textbf{Observation:} in response to the agent's action, the environment then returns an observation $o_h$ based on the history $s_h$ and current action $a_h$.
\end{itemize}
Then, we transit to a new state, which is the history up to the step $h+1$:
$$s_{h+1} = (s_h, a_h, o_h) = (x, a_1, o_1, \cdots, a_{h-1}, o_{h-1}),$$
and a new step begins. This process repeats for $H$ rounds in total and eventually, we collect a trajectory:
$$
\tau = (x, a_1, o_1, \cdots, o_{H-1}, a_{H}).
$$
See Table~\ref{tab:multi_turn_math} for an example. The framework presented here is a Markov decision process (MDP), which offers a distinct approach from the contextual bandit model discussed in \citet{xiong2024iterative}. Formally, we define the following MDP.
\begin{definition} An MDP is specified by a tuple $(\cS, \cA, H, \PP^*, d_0)$, where $\cA$ is the action space, $H$ is the episode length\footnote{In practice, the episode length can vary across the trajectories. We may additionally define that the shorter trajectories that output the final answer are in an absorbing state. We consider the fixed episode length to simplify the subsequent mathematical analysis. }, $\PP^*=\{\PP^*_h\}_{h=1}^H$ are the state transition kernels, and $d_0$ denotes the distribution of prompt $s_1=x$. For each $h \in [H]$, $\PP^*_h(\cdot|s_h,a_h)$ is the distribution of the next state given the state-action pair $(s_h,a_h)$ at step $h$. In our setup, a trajectory $\tau = (x, a_1, o_1, \cdots, o_{H-1}, a_{H})$ is generated by: $s_1=x \sim d_0$ and for all $h \in [H], a_h \sim \pi_h(\cdot|s_h), o_h \sim \PP^*_h(\cdot|s_h,a_h)$ where $s_{h+1} = (s_h, a_h, o_h)$. When there is no ambiguity, the abbreviation $s_{h+1} \sim \PP^*_h(\cdot|s_h,a_h)$ is also adopted.
\end{definition}

The MDP formulation of preference learning was recently studied in \citet{zhong2024dpo, rafailov2024r, xie2024exploratory} but with a focus on the single-turn chat task and without explicitly considering the external messages. A unique feature of RLHF, as opposed to traditional RL studies, is the \textit{relative feedback} obtained through comparisons between two trajectories that share the same initial state (prompt). We follow \citet{ziegler2019fine, ouyang2022training, bai2022training} to assume that the preference signal is generated by the so-called Bradley-Terry model.

\begin{definition}[Bradley-Terry model] We denote $\tau/x = y$, where the prompt is excluded from the trajectory. We assume that there exists a utility function of the trajectory $u^*$ such that given $(x, y^1, y^2)$, one response $y^1$ is preferred over another response $y^2$, denoted as $y^1 \succ y^2$, with probability
\begin{equation} \label{def:bt_model}
    \textup{Prob}\big(y^1 \succ y^2 \mid x, y^1,y^2 \big) = \sigma\big(u^*(x,y^1)-u^*(x,y^2)\big),
\end{equation}
where $\sigma$ is the sigmoid function $\sigma(z) = 1/(1+\exp(-z))$. Also, given $(x, y^1, y^2)$ we denote the sampled preference signal as $z$ with $z=1$ indicating $y^1 \succ y^2$ while $z=0$ indicating $y^2 \succ y^1$.
\end{definition}

Under this definition, we only assume access to the trajectory-level preference, but not an action-level one. This should distinguish our approach from a straightforward extension of the single-turn RLHF \citep{christiano2017deep, ziegler2019fine}, which fixes a prompt that may include mid-trajectory steps such as $(x, a_1, o_1, a_2, o_2)$ and look into the next single step $a_3$. However, we remark that the utility function itself, can be defined in a step-wise manner. To further illustrate the notion of the BT model in trajectory-level comparisons, we provide some examples of the utility function here.

\begin{example}[Result Checking in Math]
Since the math reasoning datasets GSM8K \citep{cobbe2021gsm8k} and  MATH \citep{hendrycks2021measuring} have the gold answer, we can check the final answer to determine the reward. In this case, $u^*(x, y) = \mathbb{I}(a_H = \text{gold answer})$. 
\end{example}

\begin{example}[Outcome-supervised Reward Models (ORMs)]
Final result checking is not perfectly reliable because we can encounter false positive solutions that have the correct answer but incorrect reasoning trajectory. Instead, as shown in \citet{cobbe2021training, lightman2023let}, we can uniformly sample $n$ trajectories per prompt and train an ORM to predict whether each solution is correct or not. Then, we can take the ORM prediction at the final token as the utility function. 
\end{example}

\begin{example}[Process-supervised Reward Model (PRM) and PRM without Human Annotation.]
\citet{lightman2023let} argues that if we can provide step-by-step supervision signal, the utility function is more effective. However, this requires more fine-grained human labels to give rating for each step of the trajectory. \citet{wang2023math} studies how to automatically construct the process-labeled data for math problems with gold answers. Specifically, for $s_h,a_h$, we generate $N$ trajectories with final answers $[a_H^j]_{j=1}^N$. We can define the proxy reward value: 
\begin{equation}
    \begin{aligned}
        r(s_h, a_h) := \frac{\sum_{j=1}^N \mathbb{I}(a_H^j = \text{gold answer}) }{N}.
    \end{aligned}
\end{equation}
We may also use a hard version
\begin{equation}
    \begin{aligned}
        r(s_h, a_h) := \mathbb{I}(\text{There exists a } j_0: a_H^{j_0} = \text{gold answer}). 
    \end{aligned}
\end{equation}
Then, we can train the PRM by 
\begin{equation}
    \mathcal{L}_{PRM}(\theta) = \E_{\tau \sim \cD} \Big[ \sum\nolimits_{h=1}^H r(s_h,a_h) \log r_\theta + (1-r(s_h,a_h)) \log (1-r_\theta)\Big]. 
\end{equation}
In this case, we can use $u^*(x, y) = \min_{h \in [H]} r_\theta(s_h, a_h)$ \citep{lightman2023let}, where $r_\theta$ is the constructed step-wise reward function. 
\end{example}

\textbf{Notations.}  To improve the readability of this work, we provide a notable table in Table~\ref{tab:notation}.

\subsection{Related Work}

\paragraph{LLMs for Mathematical Problem Solving.} A line of works proposes to prompt LLMs to solve the complex reasoning task in a step-by-step manner, known as the Chain-of-Thought (CoT) prompting  \citep{wei2022chain, zhou2022least, zhu2022solving, tongdart}, which has been a standard practice in reasoning task. However, LLMs often struggle with basic arithmetic and symbolic manipulations when relying solely on internal knowledge and natural language reasoning, as measured by standard benchmarks \citep{cobbe2021gsm8k, hendrycks2021measuring}. To overcome these limitations, several studies have explored the use of external tools to enhance the LLMs' problem-solving abilities. This includes calculators \citep{cobbe2021training, shao2022chaining}, symbolic solvers \citep{zhang2023mathematical}, and code interpreters \citep{mishra2022lila, OpenAI2023GPT4TR}. A particularly effective approach is the Program-based method (PoT), which performs CoT reasoning by writing code and using the output of the written code as the final answer \citep{gao2023pal, chen2022program}. This method significantly outperforms traditional CoT-based techniques in mathematical problem solving. However, PoT also faces challenges in planning and error handling, where natural language reasoning is more suitable \citep{gou2023critic}. In view of this, tool-integrated reasoning is proposed to combine the natural-language-based intrinsic reasoning with the external tools \citep{gou2023tora} and has achieved great progresses in recent studies \citep{gou2023tora, yue2023mammoth, yu2023metamath, shao2024deepseekmath, toshniwal2024openmathinstruct}. While these efforts have primarily focused on synthetic data generation for tool-integrated reasoning, our work aims to further boost the performance of tool-integrated LLMs by RLHF.

\paragraph{RLHF and RLHF Algorithms.} The predominant approach in RLHF is the deep RL method, Proximal Policy Optimization Algorithms (PPO) \citep{schulman2017proximal}, which leads to the great successes in Chat-GPT \citep{OpenAI2023GPT4TR}, Gemini \citep{team2023gemini}, and Claude \citep{Anthropic@claude}. However, applying PPO requires extensive efforts and resources \citep{choshen2019weaknesses, engstrom2020implementation}, often beyond the scope of open-source capabilities. In view of this, alternative approaches have been developed. The rejection sampling fine-tuning was first proposed with the name RAFT (reward ranked fine-tuning) in RLHF \citep{dong2023raft} and was later extended to machine translation \citep{gulcehre2023reinforced} and mathematical problem solving \citep{yuan2023scaling}. Its theoretical advantage was explored in \citet{gui2024bonbon}. Subsequently, another long line of works proposes direct preference learning algorithms, including Slic \citep{zhao2023slic}, DPO \citep{rafailov2023direct}, IPO \citep{azar2023general}, KTO \citep{ethayarajh2024kto}, and GPO \citep{tang2024generalized}. These algorithms bypass the reward modeling step and optimize carefully designed loss objectives directly on the preference dataset, hence the name direct preference learning. There are also some works focusing on more general preference structure \citet{munos2023nash, swamy2024minimaximalist, ye2024theoretical, rosset2024direct} beyond the reward-based framework or post-processing of the model \citep{lin2023speciality, zheng2024weak}.

The newly proposed direct preference learning algorithms have largely advanced the RLHF area, particularly the post-training of open-source models, with the Zephyr project as a notable example \citep{tunstall2023zephyr}. After this, a long line of work \citep[e.g.,][]{liu2023statistical, xiong2024iterative, guo2024direct, xu2023some, tajwar2024preference, xie2024exploratory, zhang2024self, liu2024lipo, liu2024provably, meng2024simpo} demonstrates the effectiveness of on-policy sampling (the samples are generated by the policy to be trained) and online exploration in enhancing direct preference learning. In particular, the online iterative DPO \citep{xiong2024iterative, xu2023some, snorkelai@pair} and its variants \citep[e.g.,][]{chen2024self, rosset2024direct, cen2024value, zhang2024iterative} have made state-of-the-art open-source models \citep{dong2024rlhf}, or even the industry models \citep{qwen2, meta_llama3}. Despite these advancements, most algorithms are proposed and designed for single-turn interactions and chat. The scenarios beyond single-turn chat remain largely unexplored in the existing literature. One exception is the very recent work by \citet{shani2024multi}, which studies multi-turn chat task under general preferences. In contrast, in this paper, we aim to explore the use of RLHF in multi-turn tasks that incorporate interactions with external tools. Meanwhile, they derive a mirror-descent-based policy optimization algorithm, which is also different from ours. 

\paragraph{RLHF for Math Problem Solving.} Algorithms traditionally used in general chatbot applications have been adapted to enhance the reasoning capabilities of LLMs in mathematical contexts. For instance, RAFT (Reward-rAnked Fine-Tuning) \citep{dong2023raft, yuan2023rrhf, touvron2023llama} is extensively employed for synthetic data generation, whether through on-policy (self-improving) \citep{yuan2023scaling} or off-policy (knowledge distillation) methods \citep{gou2023tora, yu2023metamath, toshniwal2024openmathinstruct, singh2023beyond, tongdart}. The reward signal in these scenarios is typically derived from either final result checking or Outcome-supervised Reward Models (ORMs) \citep{uesato2022solving, zelikman2022star}. A novel approach by \citet{lightman2023let} introduces Process-supervised Reward Models (PRMs), which provide feedback at each step of the Chain-of-Thought, demonstrating significant improvements over ORMs when combined with rejection sampling \citep{lightman2023let, wang2023math}.

In addition to the RAFT, the GRPO algorithm proposed in \citet{shao2024deepseekmath} studies multi-turn math problem solving but focuses on the CoT format without external inputs and the resulting model achieves the state-of-the-art performance in its class. The GRPO is a variant of Reinforce \citep{williams1992simple} thus falling into the scope of deep RL methods. 

Further advancements include adapting direct preference learning algorithms to mathematical problem solving. For instance, \citet{jiao2024learning, yuan2024advancing} have applied the original DPO or KTO by taking the trajectory completion as a ``meta'' action.  \citet{xie2024monte, pang2024iterative} further adapt the online iterative DPO originally designed for chat \citep{xiong2024iterative, xu2023some, snorkelai@pair} and achieve better performance for CoT reasoning. Inspired by the success of PRMs, recent studies have explored generating proxy step-wise labels for the intermediate steps of the reasoning trajectories. For instance, \citet{xie2024monte, chen2024step, lai2024step} leverage Monte Carlo Tree Search (MCTS) and use the estimated Q value to generate the proxy labels for the intermediate steps. \citet{lai2024step} proposes to use AI feedback like GPT-4 \citep{lai2024step} to find the first error step in the trajectory. Meanwhile, \citet{lu2024step} identifies a trajectory with the correct final answer and no errors as preferable, and prompts the SFT model with a high temperature, starting from some intermediate step to collect a rejected trajectory with errors \citep{pi2024strengthening}. Finally, a very recent study by \citet{chen2024step} proposes to use MCTS with a backward iteration from the final leaf node to compute the proxy unregularized value of each node. Preference pairs are then extracted from the tree by fixing the prefix and comparing \textit{the next single reasoning step}. Then, they run the original DPO on these intermediate actions with the proxy labels from MCTS. To summarize, these works present different ways of preference data collection and apply the original DPO algorithm (with some additional marginal loss and regularization adapted from the literature), thereby differing from our work in both algorithmic concepts and application scope. In contrast, we study preference learning in the context of trajectory-level comparison, where we derive the optimality condition and introduce a multi-turn DPO within an online iterative framework, specifically for tool-integrated mathematical problem solving. However, we remark that while we focus on the trajectory-level comparison, the preference signal itself can be generated in a step-by-step supervision (see Section~\ref{sec:formulation} for the detailed examples). When preference signals for partial trajectories with shared prefixes are available, our method can also adapt to learn these step-level signals (see the optimality condition in \eqref{eqn:optimality_condition}). In particular, the algorithmic design presented in this paper can be readily combined with the MCTS-based data collection strategy outlined in recent literature, which we leave for future work.

\section{Algorithms Development}
\label{sec:alg}
We develop the main algorithms of this paper in this section. We proceed to handle the general MDP formulation presented in Section~\ref{sec:formulation}, which subsumes the tool-integrated mathematical reasoning problem as a special example. Therefore, the algorithms may also be applied to more general scenarios with external messages..

\subsection{Planning with a Fixed Model: Optimality Condition}
Following \citet{rafailov2023direct}, we first establish the connection between any model $\cM = (\cS, \cA, H, \PP, d_0, u)$ and its associated optimal policy. In particular, we are interested in the following KL-regularized planning problem with respect to a reference policy $\pi_{\reff}$: 
\begin{equation}\label{eqn:optimal_policy}
    \argmax_{\pi} J(\pi; \cM, \pi_{\reff})= \E_{x \sim d_0} \E_{a_h \sim \pi_h(\cdot|s_h), o_h \sim \PP_h(\cdot|s_h,a_h)} \left[ u(x, y) - \eta \sum_{h=1}^H \KL\big( \pi_h(\cdot|s_h), \pi_{\reff, h}(\cdot|s_h)\big)\right].
\end{equation}
In the single-turn case (i.e., $H=1$ and without transitions $\PP$), \citet{rafailov2023direct, azar2023general} show that the optimal solution with respect to a utility function $u$ admits a closed-form solution, which is the \textit{Gibbs distribution} (see Lemma~\ref{lem:kl_solu}):
$$
\pi_\cM(a_1|x) \propto \pi_{\reff}(a_1|x)\exp\left(\frac{u(x,a_1)}{\eta} \right).
$$
Moving from the single-step to multi-turn scenario, we first show that we are still concerning about the Gibbs distribution, but in a dynamic programming manner. We summarize the results into the following proposition.
\begin{proposition} \label{prop:optimality}
    We can recursively define the following optimal value functions and optimal policies for a KL-regularized MDP with horizon $H$ and external observation $o_h$. For Q value, we have
    \begin{equation}
    \begin{aligned}
                Q_{\cM, h}(s_h, a_h) =\begin{cases}
    & u(s_H, a_H),  \qquad \text{ if } h = H,  \\
  & \E_{o_h \sim \PP_h(\cdot|s_h, a_h)} [V_{\cM, h+1}(s_{h+1})], \qquad \text{ if } h \leq H-1.
\end{cases}   
    \end{aligned}
    \end{equation}
    Also, for all $h \in [H]$, we have:
    \begin{equation*}
    {\footnotesize
        \begin{aligned}
            V_{\cM, h}(s_h)  = \eta \log \underbrace{\textcolor{red}{\mathbb{E}_{a_h \sim \pi_{\mathrm{ref}, h}(\cdot\mid s_h)}} \exp\Big(\frac{Q_{\cM, h}(s_h,a_h)}{\eta}\Big)}_{=: Z_h(s_h)},
        \end{aligned}}
    \end{equation*}
    \begin{equation} \label{eqn:3}
    {\footnotesize
        \begin{aligned}
            \pi_{\cM, h}(a_h \mid s_h) = \frac{\pi_{\mathrm{ref}, h}(a_h\mid s_h)}{Z_h(s_h)} \cdot \exp \Big(\frac{Q_{\cM, h}(s_h, a_h)}{\eta}\Big).
        \end{aligned}}
    \end{equation} 
\end{proposition}
We have a few interesting observations that may be of independent interests. 
\begin{enumerate}
    \item The optimal value function is characterized by the expectation with respect to the initial reference policy due to the additional KL constraint.
    \item For a fixed step $h$ and state-action pair $(s_h, a_h)$, we can treat the future as a bandit (with only one step), then, we have $Q_{\cM, h}(s_h, a_h) = \E_z u(s_h, a_h, z)$, where $z$ is a completion staring from $(s_h, a_h)$. One can use the Monte-Carlo estimation to estimate this value by multiple roll-outs. We notice that the non-regularized version of this process, is commonly referred to as the \textit{process-supervised reward} (PRM) in the literature \citep{wang2023math}. In other words, the PRM constructed in \citet{wang2023math} is essentially a Q learning process.
\end{enumerate}

The results are essentially from the study of entropy-regularized MDPs \citep{williams1991function, ziebart2010modeling}. 

To illustrate the idea, we first consider the simplest case of $H=2$, where the model is allowed to call the tool only once. Then, our goal is to maximize the following target:
$$
\begin{aligned}
    \E_{x \sim d_0}\Big[ \E_{a_1 \sim \pi_1(\cdot|x)} \Big[\E_{o_1 \sim \PP_1(\cdot|x, a_1)} \underbrace{\E_{a_2 \sim \pi_2(\cdot|s_2)} u(s_2, a_2) - \eta \KL\big(\pi_2(\cdot|s_2), \pi_{\reff, 2}(\cdot|s_2)\big)}_{ \displaystyle \text{Inner Loop} } \Big] - \eta \KL\big(\pi_1(\cdot|s_1), \pi_{\reff, 1}(\cdot|s_1)\big) \Big].
\end{aligned}
$$
The idea is to take a backward iteration from $h=H=2$ to $h=1$. Specifically, when we fix $s_2$ and consider the inner loop, we can leverage Lemma~\ref{lem:kl_solu} to solve 
$$
\pi_{\cM,2}(\cdot|s_2) = \argmax_{\pi_2} \E_{a_2 \sim \pi_2(\cdot|s_2)}\Big(u(s_2,a_2) - \eta \cdot  \KL\big(\pi_2(\cdot|s_2), \pi_{\reff, 2}(\cdot|s_2) \big) \Big) \propto \pi_{\reff, 2}(\cdot|s_2) \cdot \exp\left(\frac{u(s_2,\cdot)}{\eta}\right).
$$
Then, we can define the value of the inner loop associated with $\pi_{\cM,2}$ as 
\begin{align*}
    V_{\cM,2}(s_2)&: = \E_{a_2\sim \pi_{\cM, 2} (\cdot|s_2)}\left[ u(s_2,a_2) - \eta \KL\big(\pi_{\cM,2}(\cdot|s_2), \pi_{\reff, 2}(\cdot|s_2)\big) \right]  \\
    Q_{\cM, 1}(s_{1},a_{1})&:= \E_{o_{1}\sim \PP_{1}(\cdot|s_{1},a_{1})} \left[V_{\cM, 2}(s_2)\right].
\end{align*}
Then, for step $h=H-1=1$, we are concerning the following KL-regularized optimization problem:
\begin{align*}
\pi_{\cM, 1}(\cdot|s_{1}) = \argmax_{\pi_1} \E_{a_1 \sim \pi_1(\cdot|x)} \Big[Q_{\cM, 1}(s_1,a_1) - \eta \KL\big(\pi_1(\cdot|s_1), \pi_{\reff, 1}(\cdot|s_1)\big)  \Big] \propto \pi_{\reff,1}(\cdot|s_{1}) \cdot \exp\Big(\frac{Q_{\cM, 1}(s_{1}, \cdot)}{\eta}\Big).
\end{align*}
By construction, it can be observed that $\{\pi_{\cM, h}\}_{h=1}^2$ is optimal as it maximizes the KL-regularized target. 

For general $H$-step MDP, we can repeat the above process for $H$ times starting with $V_{\cM, H+1} = 0$ where we recursively define
\begin{equation}  \label{eqn:bellman_eqn0}
Q_{\cM, h}(s_h, a_h) =\begin{cases}
    & u(s_H, a_H),  \qquad \text{ if } h = H,  \\
  & \E_{o_h \sim \PP_h(\cdot|s_h, a_h)} [V_{\cM, h+1}(s_{h+1})], \qquad \text{ if } h \leq H-1, 
\end{cases}   
\end{equation}
Here the optimal policy and the $V$-values are given by
\begin{equation} \label{eqn:bellman_eqn}
    \begin{aligned}
        \pi_{\cM, h}(a_h|s_h) &:= \frac{1}{Z_h(s_h)} \pi_{\reff, h}(a_h|s_h) \cdot \exp \Big(\frac{Q_{\cM, h}(s_h, a_h)}{\eta}\Big) \qquad \text{(Gibbs distribution of $Q_{\cM, h}$)}  \\
        V_{\cM, h}(s_h) &:= \E_{a_h \sim \pi_{\cM, h}(\cdot| s_h)} \big[Q_{\cM, h}(s_h, a_h) - \eta \cdot \KL\big(\pi_{\cM, h} (\cdot|s_h), \pi_{\reff, h}(\cdot|s_h)\big)\big]\\
        &= \eta \log \E_{\pi_{\reff, h}(a'_h|s_h)} \exp\Big(\frac{Q_{\cM, h} (s_h,a_h')}{\eta}\Big),
    \end{aligned}
\end{equation}
where $Z_h(s_h) = \sum_{a_h \in \cA} \pi_{\reff, h}(a_h|s_h) \cdot \exp \big(\frac{Q_{\cM, h}(s_h, a_h)}{\eta}\big)$ is the normalization constant. The second equality in the definition of the $V$-value is from Lemma~\ref{lem:kl_solu}. Then, by definition, $[\pi_{\cM, h}]_{h=1}^H$ is the optimal policy. Essentially, we solve $H$ Gibbs distributions in terms of the $Q$-values\footnote{The definitions of $Q$-values are different from that of \citet{ziebart2010modeling} so that the optimal policy can be interpreted as the Gibbs distribution of $Q$-values.}. 

\subsection{Planning with a Fixed Model: Practical Algorithm}\label{subsec:practical_plan_alg}
While \eqref{eqn:bellman_eqn} can be approximately solved with standard deep RL methods, here we are interested in the implementation in a direct preference learning manner like Slic \citep{zhao2023slic}, DPO \citep{rafailov2023direct} or IPO \citep{azar2023general}. The existing attempts \citep[e.g.,][]{yuan2024advancing} take the completion $y$ as a ``meta action'' and plug it into the single-step DPO loss. In other words, they treat the external messages as the regular texts generated by the model itself. Another natural idea is to plug the probability of the trajectory into the single-step DPO loss. To be specific, for a pair $(x,\tau^w,\tau^l)$, where $\tau^w$ refers to the preferred (i.e., winning) trajectory, we have
\begin{equation} \label{eqn:m_dpo_loss}
\begin{aligned}
&- \log \sigma\Big(\eta \Big[\log \frac{\textup{Prob}_\pi(\tau^l|x)}{\textup{Prob}_{\pi_{\reff}}(\tau^l|x)} - \log \frac{\textup{Prob}_\pi(\tau^w|x)}{\textup{Prob}_{\pi_{\reff}}(\tau^w|x)}\Big] \Big) \\
&= - \log \sigma\Big(\eta \Big[\log \prod_{h=1}^H \frac{\pi_h(a^l_h|s^l_h) \cancel{\PP_h(o^l_h|s^l_h,a^l_h)}}{\pi_{\reff, h}(a^l_h|s_h^l) \cancel{\PP_h(o^l_h|s^l_h,a^l_h)}} - \log \prod_{h=1}^H \frac{\pi_h(a^w_h|s^w_h) \cancel{\PP_h(o^w_h|s^w_h,a^w_h)}}{\pi_{\reff, h}(a^w_h|s_h^w) \cancel{\PP_h(o^w_h|s^w_h,a^w_h)}}\Big] \Big) \\
&= -  \log \sigma\Big(\eta \sum_{h=1}^H \Big[\log \frac{\pi_h(a^l_h|s_h^l)}{\pi_{\reff, h}(a^l_h|s_h^l)} - \log \frac{\pi_h(a^w_h|s_h^w)}{\pi_{\reff, h}(a^w_h|s_h^w)}\Big] \Big).
\end{aligned}
\end{equation}
Unfortunately, the resulting algorithm does not always lead to the optimal policy as we explain next. In particular, we can solve the $Q$-values as 
\begin{equation}
\begin{aligned}
    Q_{\cM, h}(s_h, a_h) &= \log \frac{\pi_{\cM, h}(a_h|s_h)}{\pi_{\reff, h}(a_h|s_h)} + \eta \log \E_{\pi_{\reff, h}(a'_h|s_h)} \exp\Big(\frac{Q_{\cM, h} (s_h,a_h')}{\eta}\Big)\\
    & = \log \frac{\pi_{\cM, h}(a_h|s_h)}{\pi_{\reff, h}(a_h|s_h)} + V_{\cM, h}(s_h),
\end{aligned}
\end{equation}
where two equalities uses the definition of the optimal policy $\pi_{\cM, h}$ and $V$-value $V_{\cM, h}$ in \eqref{eqn:bellman_eqn}, respectively. Furthermore, by the definition of $Q$-values $Q_{\cM,h}$ in \eqref{eqn:bellman_eqn0}, we have
\begin{equation}
\begin{aligned}
    \E_{o_h \sim \PP_h(\cdot|s_h, a_h)} V_{\cM, h+1}(s_{h+1}) &= \log \frac{\pi_{\cM, h}(a_h|s_h)}{\pi_{\reff, h}(a_h|s_h)} + V_{\cM, h}(s_h), \qquad \text{if } h \leq H-1\\
    u(s_H, a_H) &= \log \frac{\pi_{\cM, H}(a_H|s_H)}{\pi_{\reff, H}(a_H|s_H)} + V_{\cM, H}(s_H).
\end{aligned}
\end{equation}
Summing over $h \in [H]$, we have
\begin{equation} \label{eqn:optimality_condition}
    \begin{aligned}
        u(s_H, a_H) &= \eta \sum_{h=1}^H \log \frac{\pi_{\cM, h}(a_h|s_h)}{\pi_{\reff, h}(a_h|s_h)} + \sum_{h=1}^H \Big[V_{\cM, h}(s_h) -\E_{o_h \sim \PP_h(\cdot|s_h, a_h)} V_{\cM, h+1}(s_{h+1}) \Big]\\
        &= \underbrace{\eta \sum_{h=1}^{H} \log \frac{\pi_{\cM, h}(a_h|s_h)}{\pi_{\reff, h}(a_h|s_h)}}_{\text{term $(A)$}} + \underbrace{V_{\cM, 1}(s_1)}_{\text{term $(B)$}} + \underbrace{\sum_{h=1}^{H-1} \Big[V_{\cM, h+1}(s_{h+1}) -\E_{o_h \sim \PP_h(\cdot|s_h, a_h)} V_{\cM, h+1}(s_{h+1}) \Big]}_{\text{term $(C)$}}.
    \end{aligned}
\end{equation}
Here, term $(A)$ is the counterpart of $\eta \log \frac{\pi(a_1|s_1)}{\pi_{\reff}(a_1|s_1)}$ in the single-step DPO derivation and term $(B)$ will be cancelled if we consider the reward difference of two trajectories with the same prompt $s_1 = x$. Unfortunately, in practice, term $(C)$ is typically not feasible to directly compute. Especially, some simple math leads to that with probability at least $0.9$,
$$
|C| \leq 4 \left[\sum_{h=1}^{H-1} \sigma_h^2\right]^{1/2},
$$
where $\sigma_h^2$ is the conditional variance of $V_{\cM, h+1}(s_{h+1}) -\E_{o_h \sim \PP_h(\cdot|s_h, a_h)} V_{\cM, h+1}(s_{h+1})$. Therefore, the bias term $(C)$ is related to the randomness of the external environment. 

For most cases of tool-integrated LLMs for mathematical reasoning, i.e., the focus of this work, luckily the code execution result is determined by the history (the codes written by the LLMs). In other words, given the history $s_h$, the external observation is deterministic, which leads to $\text{term } (C)=0$. Thus, with a dataset $\cD$ consisting of $(x, \tau^w, \tau^l)$, the following multi-turn DPO (M-DPO) loss can be adopted:
\begin{equation}
    \label{eqn:final_m_dpo_loss}
    \mathcal{L}_{\textup{M-DPO}}(\theta) = -\sum_{(x, \tau^w, \tau^l) \in \cD} \log \sigma\Big(\eta \sum_{h=1}^H \Big[\log \frac{\pi_{\theta, h} (a^l_h|s_h^l)}{\pi_{\reff, h}(a^l_h|s_h^l)} - \log \frac{\pi_{\theta, h}(a^w_h|s_h^w)}{\pi_{\reff, h}(a^w_h|s_h^w)}\Big] \Big),
\end{equation} 
We emphasize again that although the loss presented in \eqref{eqn:final_m_dpo_loss} is identical to the one in \eqref{eqn:m_dpo_loss}, a rigorous derivation procedure (rather than a direct plug-in) is provided. To the best of our knowledge, \eqref{eqn:final_m_dpo_loss} is new in the context of multi-turn reasoning task with external messages. In particular, it is noted that such a M-DPO loss is only valid upon deterministic transitions, i.e., term $(C) = 0$. 

Moreover, with \eqref{eqn:optimality_condition} implying that with term $(C)=0$, the implicit reward is given by $A=\eta \sum_{h=1}^{H} \log \frac{\pi^*_{h}(a_h|s_h)}{\pi_{\reff, h}(a_h|s_h)}$, a multi-turn version of KTO \citep{ethayarajh2024kto}, denoted as M-KTO, can also be naturally derived:
\begin{equation}\label{eqn:final_m_kto_loss}
    \mathcal{L}_{\textup{M-KTO}}(\theta) = \E_{x, y \sim \cD} \big[\lambda_y - v(x,y) \big],
\end{equation}
where 
$$
\begin{aligned}
    u_\theta(x, y) &= \eta \sum_{h=1}^{H} \log \frac{\pi_{u, h}(a_h|s_h)}{\pi_{\reff, h}(a_h|s_h)},\\
    z_0 & = \E_{x' \sim \cD, \tau' \sim \pi_\theta(\cdot|x') } \sum_{h=1}^H \KL\big(\pi_\theta(\cdot|s_h), \pi_{\reff}(\cdot|s_h) \big),
\end{aligned}
$$
and
$$
v(x,y) = \left\{
\begin{aligned}
\lambda_+ \sigma \big(\eta (u_\theta(x, y) - z_0) \big) & \qquad \text{if } y \sim y_{desirable} | x\\
\lambda_- \sigma \big(\eta (z_0 - u_\theta(x, y)) \big) & \qquad \text{if } y \sim y_{undesirable} | x
\end{aligned}
\right..
$$
Here $\lambda_+$ and $\lambda_-$ are two hyper-parameters. We notice that \citet{mitra2024orca} developed an online iterative version of KTO for the CoT format reasoning task. Here we extend it to build the tool-integrated reasoning agent.

The above discussions, in particular, M-DPO and M-KTO losses provided in \eqref{eqn:final_m_dpo_loss} and \eqref{eqn:final_m_kto_loss},  are focused on deterministic observations due to the deterministic nature of tool-integrated LLMs for mathematical reasoning. In contrast, some other applications may encounter stochastic observations, e.g., multi-turn chats with the external message provided by a human or another LLM \citep{shani2024multi}. In these scenarios, \eqref{eqn:final_m_dpo_loss} is biased and cannot lead to the optimal policy since $\text{term } (C)\neq 0$. Instead, one should first construct a value network based on the Bellman equations provided in  \eqref{eqn:bellman_eqn0} and \eqref{eqn:bellman_eqn}, similar to the approach in \citet{richemond2024offline}. Subsequently, $\text{term } (C)$ can be estimated using Monte-Carlo methods and serve as an adaptive margin in the preference training. Particularly, the distinctions between direct preference learning algorithms and classical deep RL methods become less clear. The exploration of this more complex algorithm and its application to general multi-turn learning scenarios is left for future research.

We note that the MDP formulation above and related discussions have been previously derived by \citet{zhong2024dpo, rafailov2024r, xie2024exploratory} in the context of either token-wise MDP or more general MDP with deterministic transition but their focuses are all on the single-turn chat tasks. Although the mathematical formulations appear similar, our primary focus lies on tool-integrated reasoning tasks that incorporate additional external messages $\{o_h\}_{h=1}^{H-1}$.

\subsection{Learning with Online Iterative Training}\label{subsec:extension_online}

In the literature of direct preference learning, a long line of work shows that the online single-turn RLHF significantly outperforms their offline counterpart, both in the literature of direct preference learning \citep{xiong2024iterative, ye2024theoretical, guo2024direct, rosset2024direct, dong2024rlhf, tajwar2024preference} and DRL-based approach or rejection sampling fine-tuning \citep{bai2022training, ouyang2022training, touvron2023llama}. Motivated by these successes, we propose to further incorporate online interactive learning to the multi-turn RLHF studied in this work. In the following, we illustrate the proposed ideas from mainly two aspects: two learning objectives and one unified algorithmic framework.


\paragraph{Learning objective.} We consider two different learning objectives. The first one is the KL-regularized target:
\begin{equation} \label{eqn:val_target}
   \max_{\pi}\E_{x \sim d_0} \E_{a_h \sim \pi(\cdot|s_h), o_h \sim \PP^*_h(\cdot|s_h,a_h)} \Big[ u^*(x, y) - \eta \sum_{h=1}^H \KL\big( \pi(\cdot|s_h), \pi_0(\cdot|s_h)\big)\Big],
\end{equation}
i.e., $\max_{\pi} J(\pi; \cM^*, \pi_0)$ where $\cM^* = (\cS, \cA, H, \PP^*, d_0, u^*)$ is the groundtruth environment and $\pi_0$ is the initial policy (e.g., from SFT) that RLHF starts from.
This target is widely adopted in practice \citep{christiano2017deep, ouyang2022training, bai2022training, rafailov2023direct, dong2024rlhf} and requires us to search for the optimal policy only at a \textit{fixed} KL ball centered at the SFT policy $\pi_0$ \citep{xiong2024iterative, ye2024theoretical, xie2024exploratory}.

In contrast, the second one is the non-regularized target, i.e., directly optimizing the reward:  
\begin{equation} \label{eqn:non_regularized_target}
   \max_{\pi} \E_{x \sim d_0} \E_{a_h \sim \pi(\cdot|s_h), o_h \sim \PP^*_h(\cdot|s_h,a_h)} \big[ u^*(x, y) \big] .
\end{equation}
This target is the standard one in canonical RL studies \citep{sutton2018reinforcement}. One motivation for this target is that in the reasoning task, the reward function is more interpretable (e.g. final result checking) compared to the chat task. 

Additionally, we note that a stronger KL regularization in the target \eqref{eqn:val_target} is known to be beneficial for mitigating over-fitting issue and forgetting on the \textit{out-of-domain} tasks \citep{gao2023scaling, lin2023speciality, coste2023reward}. On the other hand, \eqref{eqn:non_regularized_target} allows the model to move more far away, thus achieving a better \textit{in-domain} performance. Thus, from one perspective, the choice between the above two targets can be viewed as a tradeoff between out-of-domain and in-domain performances. This intuition is also verified by later experiments, where optimizing the second target in \eqref{eqn:non_regularized_target} leads to better performance on in-domain test sets. In the rest of this section, we discuss two learning objectives to fully develop the multi-turn preference learning framework. We also conduct an ablation study on these objectives in the experimental section.

\paragraph{Algorithmic framework.} We present a general online iterative algorithmic framework in Algorithm~\ref{alg:online_m_gshf}. Specifically, starting from $\pi_0$, at each iteration, we first collect a pair of trajectories by the current policy pair, where the preference signal is also revealed according to Definition~\ref{def:bt_model}. Then, we update our policy pair given the data collected so far and the next iteration begins. We now discuss some features of the framework as follows.

\textit{Policy choice for exploration-exploitation trade-off.} We update our behavior policies in a non-symmetric way. The first agent, which aims to extract the historical information we have gathered so far, planning with respect to the empirically best model on the historical dataset $\cD$ to get $\pi_t^1$, where the planning algorithms have been discussed in Section~\ref{subsec:practical_plan_alg}, e.g., optimizing the M-DPO or M-KTO loss in \eqref{eqn:final_m_dpo_loss} or \eqref{eqn:final_m_kto_loss}. However, it is widely recognized in RL studies \citep{sutton2018reinforcement,auer2002finite} that simply exploiting the historical data via following the empirically best model is not sufficient to obtain a good final policy, while it is also required to explore the environment so that  new information can be collected to facilitate subsequent learning, i.e., the exploration-exploitation tradeoff. While the main agent targeting exploitation, we design the second agent, in contrast, to strategically incorporate the uncertainty of the future relative to $\pi_t^1$ given the historical information we collect so far into its policy choice. We call the policy of the second agent $\pi_t^2$ as an exploration policy because it serves to explore the underlying environment and facilitate the first agent's learning. In practice, this principle of exploration is generally interpreted as maximizing the difference between the two behavior policies or increasing the diversity of the collected data. We summarize some popular heuristic exploration policy adopted in the online iterative RLHF practice:
\begin{itemize}
    \item Mixture sampling: in the Claude project \citep{Anthropic@claude}, the authors choose to use the checkpoints from different training steps to collect data;
    \item Inference parameters tuning: in the LLaMA project \citep{touvron2023llama}, the authors carefully tune the sampling temperature to balance data diversity and data quality;
    \item West-of-n sampling: \citet{xu2023some, snorkelai@pair, pace2024west, dong2024rlhf} samples n responses per prompt and extract the best one and the worst one (based on some ranking criteria) to construct a preference pair.
    \end{itemize}
We will explore the mixture sampling in the experimental section and also provide a theoretical justification in the next subsection.

\textit{Reference model choice for controlling regularization level.} Despite two different learning targets are discussed in \eqref{eqn:val_target} and \eqref{eqn:non_regularized_target} seperately, we note that one general algorithmic framework can be adopted with the reference model choice taking as a hyper-parameter to control the regularization level and account for the two targets:
\begin{itemize}
    \item KL-regularized target in \eqref{eqn:val_target}: if we fix the reference model as the initial policy, i.e., $\pi_{t,\reff} = \pi_0, \forall t\in [T]$, we always search the optimal policy within the KL ball centered at $\pi_0$, and thus optimize the KL-regularized target.
    \item Non-regularized target in \eqref{eqn:non_regularized_target}: in contrast, inspired by the mirror descent \citep{nemirovskij1983problem}, if we update the reference policy every iteration to be the policy learned in the last iteration, i.e., $\pi_{t, \reff} = \pi^1_{t-1}, \forall t\in [T]$, the cumulative update can make the model to move away from the original $\pi_0$ (while a constraint is made on the per-iteration update magnitude) and we thus optimize the non-regularized target.
\end{itemize}

A graphical illustration is provided in Figure~\ref{fig:kl_ball} to facilitate the understanding.

\begin{figure}[H]
    \centering 
    \includegraphics[width=0.8\linewidth]{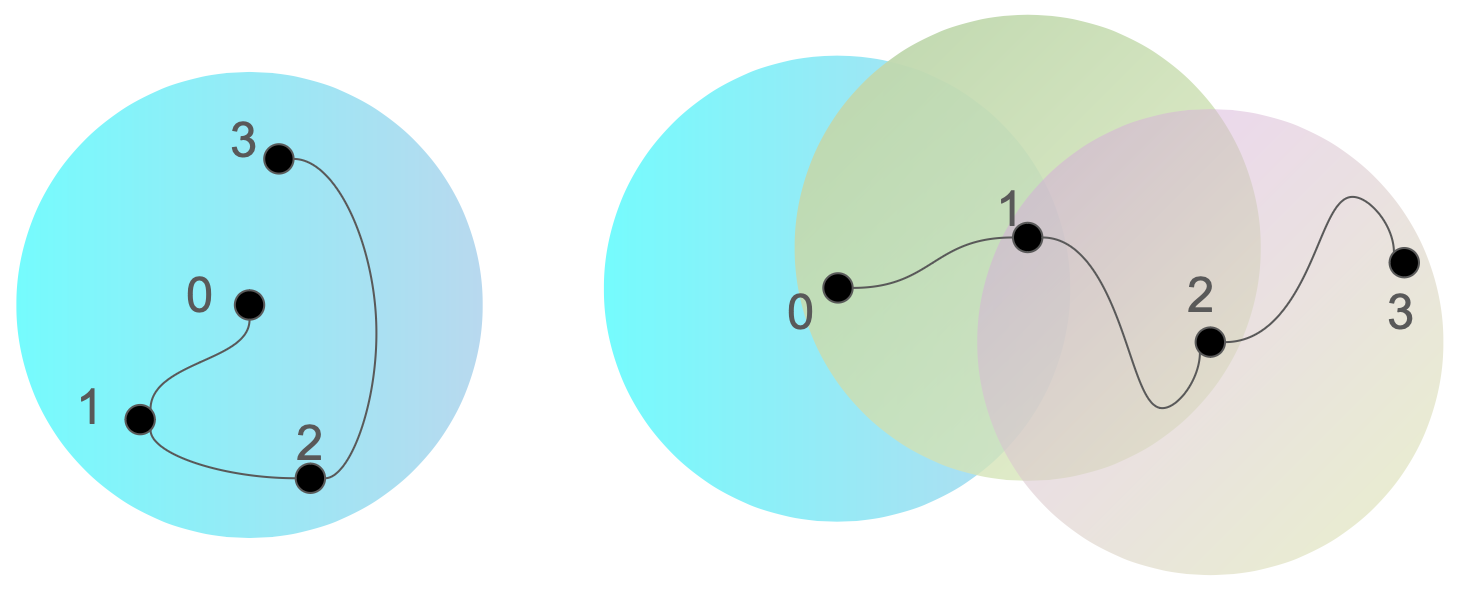}
    \caption{Illustration of the difference between the two learning objectives. The left-hand figure corresponds to the KL-regularized target where we do not update the reference model. The right-hand figure corresponds to the non-regularized target where we always update the reference model as the last-iteration one.}
    \label{fig:kl_ball}
\end{figure}

\begin{algorithm}
\caption{Online Iterative M-GSHF}
\label{alg:online_m_gshf}
\begin{algorithmic}[1]
    \STATE \textbf{Input:} KL coefficient $\eta >0$, horizon $T > 0$, initial policy $\pi_{0}$, batch size $m > 0$.
    \STATE Initialize $\cD \gets \emptyset$ and $\pi_1^1=\pi_1^2 = \pi_{1, \reff} \gets \pi_{0}$.
    \FOR{$t=1,2,\cdots,T$}
    \STATE Sample $m$ pairs $(x, \tau^1, \tau^2, z)$ as $\cD_t$ by $x \sim d_0, \tau^1 \sim \pi_t^1, \tau^2 \sim \pi_t^2$, receive the $m$ preference signals $z$ following the Bradley-Terry model from Definition~\ref{def:bt_model} and update the preference dataset $\cD \gets \cD \cup \cD_t$.
    \STATE \texttt{$\triangleright$ \textbf{Extract the empirically optimal policy from historical data}}
    \STATE \textcolor{magenta}{\textbf{Practical:}} Perform the planning algorithms on $\cD$ to get $\pi_t^1$ (e.g., using the M-DPO loss in \eqref{eqn:final_m_dpo_loss} or the M-KTO loss in \eqref{eqn:final_m_kto_loss})
    \STATE \textcolor{blue}{\textbf{Theoretical:}} Perform MLE on $\cD$ to obtain model estimation $\hat{\cM}_t = (\hat{u}_t, \hat{\PP}_t)$ as in \eqref{eqn:mle_reward} and \eqref{eqn:mle_transition}; call Oracle~\ref{def:gibbs_oracle} with $\hat{\cM}_t, \eta, \pi_{t, \reff}$ to get $\pi_t^1$
    \STATE \texttt{$\triangleright$ \textbf{Select the exploration policy to facilitate learning}}
    \STATE \textcolor{magenta}{\textbf{Practical:}} Given $\pi_t^1$, select $\pi_t^2$ as an exploration policy using heuristic methods  (such as mixture sampling, inference parameters tuning and west-of-n sampling listed in Section~\ref{subsec:extension_online})
    \STATE \textcolor{blue}{\textbf{Theoretical:}} Given $\pi_t^1$, choose $\pi_t^2$ as an exploration policy following \eqref{eqn:exploration_policy_thm}
    \STATE \texttt{$\triangleright$ \textbf{Choose the reference model to  control regularization level}}
    \IF{KL-regularized target in \eqref{eqn:val_target}}
    \STATE Keep $\pi_{t+1, \reff} \gets\pi_{0}$
    \ELSIF{Non-regularized target in \eqref{eqn:non_regularized_target}}
    \STATE Update $\pi_{t+1, \reff} \gets \pi^1_{t}$
    \ENDIF
    \ENDFOR
        \STATE \textbf{Output:} the best model in $\pi^1_{1:T}$ by a validation set.
\end{algorithmic}
\end{algorithm}

\subsection{Theoretical Result}

In this section, we show that the multi-turn RLHF problem can be solved in a statistically efficient manner under standard assumptions in learning theory literature. In particular, for generality, we target the most challenging scenario with stochastic and unknown transitions, while as aforementioned, multi-turn mathematical reasoning with external tools falls into an relatively easier regime with deterministic transitions. Here we mostly studies the KL-regularized target due to the lack of theoretical research on it. The other target of optimizing the rewards has been theoretically studied in \citet{wang2023rlhf} while the techniques of analyzing mirror-descent-style algorithm and corresponding guarantees have also be developed in \citet{cai2020provably}, which can be migrated to considering preference feedbacks. Also, to ease the presentation, we consider the scenario with batch size $m=1$, while the results can be easily generalized to large batches.

First, to measure the online learning process, we define the optimal policy as
\begin{equation} \label{eqn:val_target2}
    \pi^* := \argmax_{\pi} J(\pi):= J(\pi; \cM^*, \pi_{0}),
\end{equation}
and introduce the standard notion of regret as
\begin{equation}\label{eqn:regret}
    \text{Reg}(T):=\sum_{t\in [T]}J(\pi^*) - J(\pi^1_t),
\end{equation}
which represents the cumulative performance loss over $T$ steps comparing the learned policies $[\pi_t^1]_{t=1}^T$ against the optimal policy $\pi^*$. In addition, we consider that a bounded $u^*(x, y) \in [0, B]$ for all $(x, y)$ to maintain a reasonable utillity regime. Also,  it is assumed that we have accesses to the following policy improvement oracle, that is analogue to the one considered in \citet{xiong2024iterative}.

\begin{definition}[Policy Improvement Oracle] \label{def:gibbs_oracle} For any model $\cM = (\cS, \cA, H, \PP, d_0, u)$ and a reference function $\pi_{\reff}$, we can compute the optimal policy associated with the model $[\pi_{\cM, h}]_{h=1}^H$ iteratively as in \eqref{eqn:bellman_eqn}.
\end{definition}

The overall algorithm, i.e., the theoretical version of online iterative M-GSHF, is also summarized in Algorithm~\ref{alg:online_m_gshf}. At each round $t$, with $\cD = \cup_{i =1}^{t-1}\cD_{i}$ as the aggregated dataset, it starts with performing a maximum likelihood estimation (MLE) of the reward function $u^*$ over a set $\cU$, whose elements are bounded in $[0, B]$, as
\begin{equation}\label{eqn:mle_reward}
    \hat{u}_t = \argmax_{\hat{u}\in \cU} L_t(\hat{u}) := \sum_{(x, \tau^1, \tau^2, z) \in \cup_{i =1}^{t-1}\cD_{i}} \Big[z\log(\sigma(\hat{u}(\tau^1) - \hat{u}(\tau^2))) + (1-z)\log(\sigma(\hat{u}(\tau^2) - \hat{u}(\tau^1))) \Big], 
\end{equation}
and also an MLE of the transition kernel $\PP^*$ over a set $\cP$ as
\begin{equation}\label{eqn:mle_transition}
    \hat{\PP}_t = \argmax_{\hat{\PP}\in \cP} L_t(\hat{\PP}) := \sum_{(\pi, \tau) \in \cup_{i =1}^{t-1}\cD_{i}} \log \hat{\PP}^\pi(\tau),
\end{equation}
where $\PP^\pi(\tau)$ denotes the probability of trajectory $\tau$ under policy $\pi$ and transition kernel $\PP$. With the obtained model $\hat{\cM}_t = (\hat{u}_t, \hat{\PP}_t)$, the Oracle defined in Definition~\ref{def:gibbs_oracle} is called with the reference policy $\pi_{\reff}$ set as the initial policy $\pi_{0}$, whose output is adopted as the main policy $\pi^1_t$.

Then, we specify how to choose a theoretically sound exploration policy $\pi_t^2$. The previous work of \citet{xiong2024iterative} on single-turn RLHF has demonstrated the intuition that the exploration policy should be in charge of collecting information of the uncertain parts of the environment $\cM$, which is thus often selected to maximize one uncertainty measurement. In the multi-turn RLHF setup considered in this work, the following proposition serves as the cornerstone to find a suitable uncertainty measurement to decide the exploration policy. In particular, we can observe that the optimal policy is parameterized by the optimal $Q$-function. If a different set of $Q$-function is adopted for policy parameterization, we can bound its performance as follows.
\begin{proposition}[Value Decomposition Lemma for KL-regularized MDP]\label{prop:performance difference}
    If considering a set of $Q$-functions $[\hat{Q}_h]_{h=1}^H$ and a reference policy $\pi_\reff$ with the induced policy $\hat{\pi}$ as
    \begin{align*}
        \hat{\pi}_h(a_h|s_h) \propto \pi_{\reff,h}(a_h|s_h) \cdot \exp\left(\hat{Q}_h(s_h, a_h)/\eta\right),
    \end{align*}
    and the corresponding set of $V$-functions $[\hat{V}_h]_{h=1}^H$ as
    \begin{align*}
        \hat{V}_h(s_h) = \E_{a_h\sim \hat{\pi}_h(\cdot|s_h)}\left[\hat{Q}_h(s_h, a_h)\right] - \eta \KL(\hat{\pi}_h(\cdot|s_h), \pi_{\reff,h}(\cdot|s_h)), \qquad \hat{V}_{H+1}(s_{H+1}) = 0,
    \end{align*}
    for any comparator policy $\pi$, it holds that
    \begin{align*}
        J(\pi) - J(\hat{\pi})& = \E_{d_0,\pi, \PP^*}[u^*(s_H, a_H)] - \E_{d_0, \hat{\pi}, \PP^*}[u^*(s_H, a_H)]\\
        &+ \sum_{h\in [H]}\E_{d_0, \pi, \PP^*} \left[\hat{V}_{h+1}(s_{h+1}) - \hat{Q}_{h}(s_{h}, a_{h})\right] - \sum_{h\in [H]}\E_{d_0, \hat{\pi}, \PP^*}\left[\hat{V}_{h+1}(s_{h+1}) - \hat{Q}_{h}(s_{h}, a_{h})\right]\\
        &- \eta\cdot \sum_{h\in [H]}\E_{d_0, \pi, \PP^*}\left[\KL(\pi_{h}(\cdot|s_{h}), \hat{\pi}_{h}(\cdot|s_{h}))\right],
    \end{align*}
    where the expectation $\E_{d_0, \pi, \PP^*}$ is with respect to the prompt and response (i.e., the trajectory) generated following $d_0, \PP^*$ and $\pi$.
\end{proposition}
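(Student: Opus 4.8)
The plan is to run the classical performance-difference (value decomposition) argument for MDPs, adapted to the KL-regularized objective; the only non-routine ingredient is a ``completion of the square'' identity for Gibbs policies that already underlies Lemma~\ref{lem:kl_solu}.

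\textbf{Step 1 (Gibbs identity).} I would first show that for \emph{any} conditional distribution $\pi_h(\cdot|s_h)$ and any $s_h$,
\[
\E_{a_h\sim\pi_h(\cdot|s_h)}\!\big[\hat Q_h(s_h,a_h)\big]-\eta\,\KL\big(\pi_h(\cdot|s_h),\pi_{\reff,h}(\cdot|s_h)\big)=\hat V_h(s_h)-\eta\,\KL\big(\pi_h(\cdot|s_h),\hat\pi_h(\cdot|s_h)\big).
\]
From the definition of $\hat\pi_h$ one has $\hat Q_h(s_h,a_h)=\eta\log\frac{\hat\pi_h(a_h|s_h)}{\pi_{\reff,h}(a_h|s_h)}+\eta\log Z_h(s_h)$ with $Z_h(s_h)=\E_{\pi_{\reff,h}(\cdot|s_h)}\exp(\hat Q_h(s_h,\cdot)/\eta)$; substituting into the left-hand side, the reference-policy terms cancel and leave $\eta\log Z_h(s_h)-\eta\KL(\pi_h(\cdot|s_h),\hat\pi_h(\cdot|s_h))$, while $\eta\log Z_h(s_h)=\hat V_h(s_h)$ is exactly Lemma~\ref{lem:kl_solu} applied with ``reward'' $\hat Q_h$ (equivalently, set $\pi_h=\hat\pi_h$ in the displayed identity and compare with the defining formula of $\hat V_h$).

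\textbf{Step 2 (telescoping into a formula for $J(\pi)$).} For an arbitrary comparator $\pi$, rolling in with $\pi$ under the true kernel $\PP^*$ and using $\hat V_{H+1}\equiv 0$ gives the telescoping identity $\E_{d_0,\pi,\PP^*}\big[\sum_{h=1}^H(\hat V_h(s_h)-\hat V_{h+1}(s_{h+1}))\big]=\E_{d_0}[\hat V_1(s_1)]$, a quantity independent of $\pi$. Rewriting each $\hat V_h(s_h)$ by Step~1 (conditionally on $s_h$) and taking the full expectation expresses the regularizer $-\eta\sum_h\E_{d_0,\pi,\PP^*}\KL(\pi_h,\pi_{\reff,h})$ of $J(\pi)$ as $\E_{d_0}[\hat V_1(s_1)]+\sum_h\E_{d_0,\pi,\PP^*}[\hat V_{h+1}(s_{h+1})-\hat Q_h(s_h,a_h)]-\eta\sum_h\E_{d_0,\pi,\PP^*}\KL(\pi_h,\hat\pi_h)$; adding the terminal reward $\E_{d_0,\pi,\PP^*}[u^*(s_H,a_H)]$ then yields
\[
J(\pi)=\E_{d_0,\pi,\PP^*}[u^*(s_H,a_H)]+\E_{d_0}[\hat V_1(s_1)]+\sum_{h\in[H]}\E_{d_0,\pi,\PP^*}\big[\hat V_{h+1}(s_{h+1})-\hat Q_h(s_h,a_h)\big]-\eta\sum_{h\in[H]}\E_{d_0,\pi,\PP^*}\big[\KL(\pi_h,\hat\pi_h)\big].
\]
Crucially, $\hat Q$ is \emph{not} assumed to satisfy any Bellman relation under $\PP^*$, so the residuals $\hat V_{h+1}-\hat Q_h$ genuinely survive — this is exactly why they appear in the statement.

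\textbf{Step 3 (subtract).} Applying the displayed formula to both $\pi$ and $\hat\pi$ and subtracting, the $\pi$-independent term $\E_{d_0}[\hat V_1(s_1)]$ cancels and, for $\hat\pi$, the final KL sum vanishes because $\KL(\hat\pi_h,\hat\pi_h)=0$; what remains is precisely the claimed decomposition into the terminal-reward difference, the difference of summed Bellman residuals under the two roll-ins, and the penalty $-\eta\sum_h\E_{d_0,\pi,\PP^*}\KL(\pi_h,\hat\pi_h)$. The argument is elementary: the single real idea is the Gibbs identity of Step~1, and the only place requiring care is the bookkeeping of which roll-in distribution ($\pi$ versus $\hat\pi$, always paired with the true transition $\PP^*$) and which sign attaches to each KL and residual term. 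I do not anticipate a substantive obstacle.
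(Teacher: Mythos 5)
Your proof is correct, and it rests on exactly the same key identity as the paper's: your Step 1 ``Gibbs identity'' $\E_{\pi_h}[\hat Q_h]-\eta\KL(\pi_h,\pi_{\reff,h})=\hat V_h-\eta\KL(\pi_h,\hat\pi_h)$ is precisely the paper's ``term (I)'' computation, obtained in both cases by writing $\hat Q_h=\eta\log\frac{\hat\pi_h}{\pi_{\reff,h}}+\eta\log Z_h$ and regrouping the three KL terms. The only difference is organizational: you telescope $\sum_h(\hat V_h(s_h)-\hat V_{h+1}(s_{h+1}))=\hat V_1(s_1)$ forward under a single roll-in to get a standalone formula for $J(\pi)$ valid for any $\pi$, and then subtract the two instances (letting $\E_{d_0}[\hat V_1]$ cancel and $\KL(\hat\pi_h,\hat\pi_h)=0$ kill the penalty for $\hat\pi$), whereas the paper expands the difference $\E[V_h^\pi-\hat V_h]$ under the $\pi$ roll-in minus $\E[V_h^{\hat\pi}-\hat V_h]$ under the $\hat\pi$ roll-in and iterates this recursion over $h\in[H]$, carrying both roll-ins simultaneously. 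The two routes are mathematically equivalent; yours is somewhat more modular (the intermediate formula for $J(\pi)$ makes it transparent that the Bellman residuals $\hat V_{h+1}-\hat Q_h$ survive only because $\hat Q$ need not satisfy any Bellman relation under $\PP^*$, and that the $\hat\pi$-branch contributes no KL term), while the paper's version avoids stating the intermediate identity and works directly with the quantity to be bounded. No gap in either.
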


Based on Proposition~\ref{prop:performance difference}, the exploration policy $\pi_t^2$ is selected as
    \begin{align}
        \pi_t^2 = \argmax_{\pi}\max_{ \tilde{u}\in \tilde{\cU}_t, \tilde{\PP}\in \tilde{\cP}_t}&\underbrace{\E_{d_0, \pi, \tilde{\PP}}\left[\tilde{u}(s_H, a_H)\right] - \E_{d_0, \pi^1_{t}, \tilde{\PP}}\left[\tilde{u}(s_H, a_H)\right] - \left(\E_{d_0, \pi, \tilde{\PP}}\left[\hat{u}_t(s_H, a_H)\right] - \E_{d_0, \pi^1_t, \tilde{\PP}}\left[\hat{u}_t(s_H, a_H)\right]\right)}_{\text{uncertainty measurement of reward estimation}} \notag\\
        &+ \underbrace{\sum_{h\in [H]}\E_{d_0,\pi, \tilde{\PP}}\left[  \hat{V}_{t, h+1}(s_{h+1})  - \left[\hat{\PP}_{t, h}\hat{V}_{t, h+1}\right](s_h, a_h)\right]}_{\text{uncertainty measurement of transition estimation}} \label{eqn:exploration_policy_thm},
    \end{align}
    where $\tilde{\cU}_t$ and $\tilde{\cP}_t$ are two confidence sets defined as
    \begin{equation}\label{eqn:confidence}
        \begin{aligned}
                \tilde{\cU}_t &= \{u\in \cU: L_t(u) \geq L_t(\hat{u}_t) -  c_1 \log(|\cU|T/\delta)\},\\
    \tilde{\cP}_t &= \{\PP\in \cP: L_t(\PP) \geq L_t(\hat{\PP}_t) - c_1 \log(|\cP|T/\delta)\}
        \end{aligned}
    \end{equation}
with $c_1$ denoting an absolute constant here. Note that for the theoretical convenience, we have assumed $\cU$ and $\cP$ are finite here, which can be extended to the infinite case using standard discretization techniques.
It can be observed that $\pi_t^2$ is selected to maximize a combination of uncertainty from estimations of both rewards and transitions. If considering known transitions (i.e., without the need to estimate $\PP$), the uncertainty from the estimation of transitions dimimishes, which leads to a similar uncertainty measurement adopted in \citet{xiong2024iterative}.

The following theorem establishes a rigorous guarantee for the regret incurred.
\begin{theorem}\label{thm:online_guarantee}
    Assuming $u^*\in \cU$ and $\PP^*\in \cP$, with probability at least $1-\delta$, we have that
    \begin{align*}
        \textup{Reg}(T)  \lesssim & \kappa^{-1}B\sqrt{d_{\cU}T\log(|\cU|T/\delta)} + B^2H\xi(d_\cP, T, c_2\log(|\cP|HT/\delta))\\
        &- \eta\cdot \sum_{h\in [H]}\E_{d_0, \pi^*, \PP^*}\left[\KL(\pi^*_{h}(\cdot|s_{h}), \pi^1_{t, h}(\cdot|s_{h}))\right],
    \end{align*}
    where $\kappa:= 1/(2+ \exp(-B)+ \exp(B))$, $c_2$ is an absolute constant, $d_{\cU}$ is the Eluder coefficient defined in Definition~\ref{def:eluder_coefficient} while $d_\cP$ and $\xi(\cdot)$ are from the generalized Eluder-type condition defined in Definition~\ref{def:eluder_condition}.
\end{theorem}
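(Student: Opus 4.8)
The plan is to follow the ``regret $=$ accumulated on-policy uncertainty'' paradigm of \citet{xiong2024iterative}, but carried out at the level of the KL-regularized multi-turn MDP and with the extra bookkeeping forced by the unknown transitions. The starting point is Proposition~\ref{prop:performance difference}: I would instantiate the value-decomposition identity with the induced policy $\hat\pi=\pi^1_t$ built from the $Q$-functions $\hat{Q}_{t,h}$ of the MLE model $\hat{\cM}_t=(\hat{u}_t,\hat{\PP}_t)$ via \eqref{eqn:bellman_eqn}, and with comparator $\pi=\pi^*$ from \eqref{eqn:val_target2}. Because $\hat{Q}_{t,h}$ obeys the Bellman recursions \eqref{eqn:bellman_eqn0} for $\hat{\cM}_t$ (so $\hat V_{t,H+1}-\hat{Q}_{t,H}=-\hat{u}_t$, and for $h\le H-1$ the term $\hat V_{t,h+1}-\hat{Q}_{t,h}$ has $\PP^*$-conditional mean $((\PP^*_h-\hat{\PP}_{t,h})\hat V_{t,h+1})(s_h,a_h)$), the on-policy $\hat{u}_t$ contributions cancel and one obtains
\begin{align*}
J(\pi^*)-J(\pi^1_t)
&=\big(\E_{d_0,\pi^*,\PP^*}-\E_{d_0,\pi^1_t,\PP^*}\big)\big[(u^*-\hat{u}_t)(s_H,a_H)\big]\\
&\quad+\sum_{h=1}^{H-1}\big(\E_{d_0,\pi^*,\PP^*}-\E_{d_0,\pi^1_t,\PP^*}\big)\big[\big((\PP^*_h-\hat{\PP}_{t,h})\hat V_{t,h+1}\big)(s_h,a_h)\big]\\
&\quad-\eta\sum_{h\in[H]}\E_{d_0,\pi^*,\PP^*}\big[\KL\big(\pi^*_h(\cdot|s_h),\pi^1_{t,h}(\cdot|s_h)\big)\big].
\end{align*}
The last line is exactly the non-positive KL term retained in the theorem; everything else must be absorbed into the two uncertainty measurements. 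A preliminary step here is to check that the $V$-functions stay uniformly bounded, $\|\hat V_{t,h}\|_\infty=O(B)$, using $u^*\in[0,B]$ together with the $\eta\log\E_{\pi_{\reff}}\exp$ form in \eqref{eqn:bellman_eqn}; this is what keeps the transition errors on a bounded scale.

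\paragraph{Validity of the confidence sets and reduction to the exploration objective.} Next I would show that on an event of probability at least $1-\delta$ the Bradley--Terry MLE \eqref{eqn:mle_reward} and the transition MLE \eqref{eqn:mle_transition} concentrate, so that $u^*\in\tilde{\cU}_t$ and $\PP^*\in\tilde{\cP}_t$ for all $t\in[T]$ --- the standard log-loss / log-likelihood martingale concentration plus a union bound over the finite classes and over $t$, which produces precisely the $\log(|\cU|T/\delta)$ and $\log(|\cP|T/\delta)$ radii in \eqref{eqn:confidence}. On this event $(\pi^*,u^*,\PP^*)$ is feasible in the maximization \eqref{eqn:exploration_policy_thm} defining $\pi_t^2$. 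Rewriting the transition part of \eqref{eqn:exploration_policy_thm} via $\E_{o_h\sim\tilde{\PP}_h}[\hat V_{t,h+1}(s_{h+1})]=[\tilde{\PP}_h\hat V_{t,h+1}](s_h,a_h)$ and matching terms against the display above, one gets, writing $\Phi_t(\pi)$ for the objective maximized in \eqref{eqn:exploration_policy_thm},
\begin{align*}
J(\pi^*)-J(\pi^1_t)+\eta\sum_{h\in[H]}\E_{d_0,\pi^*,\PP^*}\big[\KL\big(\pi^*_h,\pi^1_{t,h}\big)\big]
\;\le\;\Phi_t(\pi_t^2)+\sum_{h=1}^{H-1}\E_{d_0,\pi^1_t,\PP^*}\big[\big((\hat{\PP}_{t,h}-\PP^*_h)\hat V_{t,h+1}\big)(s_h,a_h)\big].
\end{align*}
The last sum is a residual $\pi^1_t$-roll-in transition error that is not literally present in $\Phi_t$; crucially, since at round $t$ the dataset $\cD_t$ is collected from \emph{both} $\pi^1_t$ and $\pi^2_t$, this residual is itself an on-policy transition error covered by the same Eluder-type estimate used below (alternatively it is bounded directly using $\PP^*\in\tilde{\cP}_t$ and $\|\hat V_{t,h+1}\|_\infty=O(B)$). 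Summing over $t$ then gives $\textup{Reg}(T)\le\sum_{t\in[T]}\Phi_t(\pi^2_t)+(\text{residuals})-\eta\sum_{t\in[T]}\sum_{h\in[H]}\E_{d_0,\pi^*,\PP^*}[\KL(\pi^*_h,\pi^1_{t,h})]$.

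\paragraph{Bounding the accumulated uncertainty.} It remains to control $\sum_{t\in[T]}\Phi_t(\pi^2_t)$, splitting it into a reward part and a transition part. The reward part is an \emph{on-policy} discrepancy between $\tilde{u}_t$ and $\hat{u}_t$ --- both in the confidence ball around the Bradley--Terry MLE, hence both close to $u^*$ --- evaluated along the roll-in of the data-collecting policy $\pi^2_t$. Converting the log-loss guarantee into control of the induced pseudometric in the \emph{difference} $u(\tau^1)-u(\tau^2)$ costs the factor $\kappa^{-1}$ with $\kappa=1/(2+e^{-B}+e^{B})$ (the minimal slope of $\sigma$ on $[-B,B]$), after which the Eluder-coefficient condition (Definition~\ref{def:eluder_coefficient}) turns $\sum_t(\text{on-policy reward error})_t$ into $\kappa^{-1}B\sqrt{d_{\cU}T\log(|\cU|T/\delta)}$ by the usual ``sum of on-policy errors $\le\sqrt{\dim\cdot T\cdot\text{radius}}$'' argument. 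For the transition part, the generalized Eluder-type condition (Definition~\ref{def:eluder_condition}) is designed precisely to bound $\sum_t\sum_h\E_{d_0,\pi^2_t,\PP^*}[((\PP^*_h-\hat{\PP}_{t,h})\hat V_{t,h+1})(s_h,a_h)]$ (and likewise the $\pi^1_t$-roll-in residual) by $\xi(d_\cP,T,c_2\log(|\cP|HT/\delta))$, where the $B^2H$ prefactor comes from the $O(B)$ range of $\hat V_{t,h+1}$ across the $H$ steps together with the conversion of the Hellinger guarantee of the transition MLE into value-weighted error. Adding the two pieces to the retained KL term yields the claimed bound.

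\paragraph{Main obstacle.} The single-turn analysis never estimates a transition kernel, whereas here the trajectory law itself depends on the unknown $\PP^*$, so the value decomposition produces the cross terms $(\E_{d_0,\pi^*,\PP^*}-\E_{d_0,\pi^1_t,\PP^*})[(\PP^*_h-\hat{\PP}_{t,h})\hat V_{t,h+1}]$ in which both the measure and the integrand involve $\PP^*$ versus $\hat{\PP}_t$. Making these telescope and concentrate is exactly what the generalized Eluder-type condition is introduced for, and the delicate point is to align the roll-in distributions in the value decomposition with those in \eqref{eqn:exploration_policy_thm} (including the residual $\pi^1_t$-roll-in term, which is why the algorithm must collect data from both behaviour policies) while keeping the value functions uniformly $O(B)$ so that the transition errors contribute only a $B^2H$ factor rather than growing uncontrollably with $H$. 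A secondary but genuine technical step is the Bradley--Terry MLE analysis: the likelihood controls a Hellinger-type distance in the \emph{pairwise difference} of utilities, and translating this into usable control of $\hat{u}_t-u^*$ is what forces the $\kappa^{-1}=2+e^{-B}+e^{B}$ loss.
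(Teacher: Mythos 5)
Your proposal follows essentially the same route as the paper's proof: instantiate Proposition~\ref{prop:performance difference} with $\hat\pi=\pi^1_t$ built from $\hat{\cM}_t$, use the validity of the confidence sets (the paper's Lemma~\ref{lem:confidence_set_valid}) to make $(\pi^*,u^*,\PP^*)$ feasible in \eqref{eqn:exploration_policy_thm} and bound the regret by the exploration objective at $(\pi^2_t,\tilde u_t,\tilde\PP_t)$ plus the residual $\pi^1_t$-roll-in transition error (the second half of the paper's ``term (II)$_t$''), then control the accumulated reward uncertainty via the in-sample MLE error, the mean-value-theorem factor $\kappa^{-1}$, and the Eluder coefficient, and the accumulated transition uncertainty via the generalized Eluder-type condition. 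The decomposition, the handling of the $\pi^1_t$ residual through data collected by both behaviour policies, and the sources of the $\kappa^{-1}$ and $\xi$ factors all match the paper's argument, so this is the same proof in sketch form.
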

We note that the Eluder coefficient  and the generalized Eluder-type condition are standard and well-adopted conditions in the theoretical studies on RL \citep{zhang2023mathematical, zhong2022gec, liu2023optimistic, xie2022role, agarwal2023vo} and also RLHF \citep{zhan2023provable,wang2023rlhf, ye2024theoretical}. Moreover, for a board class of RL problems (see \citet{zhang2023mathematical,liu2023optimistic} for more details), the Eluder coefficient $d_{\cU}$ is small and the condition is satisfied with $\xi(d_\cP, T, c_2\log(|\cP|HT/\delta)) \lesssim \sqrt{d_{\cP}T\log(|\cP|HT/\delta)}$, which implies that the regret of theoretical version of Algorithm~\ref{alg:online_m_gshf} is sublinear in $T$, further evidencing its statistical efficiency.

\section{Experiments}

\subsection{Experiment Setup}
\paragraph{Task, and datasets.} We use the test sets of MATH \citep{hendrycks2021measuring} and GSM8K \citep{cobbe2021gsm8k} to measure the model's ability to solve the mathematical problems. The MATH dataset includes 5K problems across diverse mathematical fields such as algebra, geometry, probability, number theory, and calculus. The GSM8K test set consists of 1319 grade-school math word problems, which are generally simpler than those in the MATH dataset. Examples from each dataset are as follows:
\begin{itemize}
    \item GSM8K: Natalia sold clips to 48 of her friends in April, and then she sold half as many clips in May. How many clips did Natalia sell altogether in April and May?
    \item MATH: Find the center of the circle with equation $x^2 - 6x + y^2 + 2y = 9$.
\end{itemize}
To effectively solve these problems, the model needs to perform multi-turn reasoning and arithmetic operations before getting the final answer. To construct the training prompt set, we follow \citet{gou2023tora, yu2023metamath, yue2023mammoth, liu2024augmenting, toshniwal2024openmathinstruct} to use an augmented prompt set from the 7.5K training problems of MATH and 7.47K training problems of GSM8K. In particular, we use the prompts from MetaMathQA \citep{yu2023metamath} and MMIQC \citep{liu2024augmenting}. The new questions include rephrasing question, backward question (starting with the final answer and thinking backward to determine an unknown variable in the original question), and bootstrapping questions by in-context learning and iterative question composing \citep{liu2024augmenting}. We delete the duplicate questions and also ensure that none from the test sets of MATH and GSM8K were used. Eventually, we have 60K training prompts in total for training and randomly split them into three disjoint sets for iterative training. We also reserve a set of 1K prompts for model selection during the training.

\paragraph{Base models.} We train with a range of base models, including Gemma-1.1-it-7B \citep{team2024gemma}, CodeGemma-1.1-it-7B \citep{team2024codegemma}, Mistral-7B-v0.3 \citep{jiang2023mistral}, and Gemma2-it-9B. We use the pre-trained version of Mistral instead of the instruction version because the chat template of its huggingface checkpoint and that of their own code base are not consistent so we start from the pre-trained model and fine-tune it by ourselves.

\paragraph{Data format and generation.} We format the data into a multi-turn chat where the user initially ask the LLMs a question, and provide the messages returned by the Python interpreter in the subsequent user rounds of chat. In each model turn, the model reasons based the history gathered so far and can output a final answer enclosed in $\backslash$boxed, or call the Python interpreter by writing a code wrapped in ```python and ```. After receiving the response of the model, we return the execution result of the code if the model calls the tool, and stop if the model outputs the final answer or reaches the maximal number of rounds H (6 in our setting). See Table~\ref{tab:multi_turn_math} for an illustration. We generated N=30 samples per prompt for each iteration using a temperature setting of 1.0, without employing top-K or top-p sampling. We employ a mixture sampling strategy, where the up-to-date model generates only 20 trajectories, and the remainder (10 trajectories) are collected using the model from the last iteration. For the initial iteration, we employed models fine-tuned for 3 epochs and 1 epoch, respectively, to conduct mixture sampling. Intuitively, the mixture sampling helps to improve the diversity of the collected samples, and have been employed in previous RLHF practices \citep{bai2022training, dong2024rlhf}. For all the data generation process, we adopt the following constraints: (1) for each turn, the model can generate up to 512 tokens; (2) the maximal number of steps is H=6; (3) the maximal number of token for each trajectory is 2048. 


\paragraph{Supervised fine-tuning (SFT).} We first fine-tune the model for the tool-integrated reasoning task \citep{gou2023tora}, using a subset of the Open-MathInstruct dataset, which was generated by the permissively licensed \texttt{Mixtral-8x7B} model through in-context learning. The problems are from the training sets of MATH and GSM8K datasets. We restrict the number of samples for each question to be $50$ and remove the nearly duplicate responses. Eventually we get 510K samples in the SFT dataset. We train the models for 4 epochs at most with a learning rate of 5e-6 for Gemma instruct models \citep{team2024gemma} and a learning rate of 1e-5 for Mistral-v0.3 model \citep{jiang2023mistral}. The learning rates are determined by searching \{2e-6, 5e-6, 1e-5\}. We use the pretrained model of Mistral because the chat template of Mistral instruct models are not consistent in different code bases (huggingface and the official one) at the time of our experiments. We use a cosine learning rate scheduler and set the warm-up steps as 100. The samples are packed into blocks with length 4096 to accelerate training and a global batch size of 64 is used. We also mask all the user messages (i.e., the prompt and the messages returned by the Python interpreter) in the training. It takes roughly 10-15 hours when training with 8xA100 80G GPUs. The checkpoint at the end of the third epoch is used for Gemma and the checkpoint of the end of the second epoch is used for Mistral as the starting point for RLHF. This is because these models outperform the last-iteration one with considerable margin and is very close to the next one.  An ablation study on the SFT epochs is also included.

\paragraph{Data Annotation.} For each prompt, we first divide the responses into the winning set $G^w$ and the losing set $G^l$ by checking the final answer. In practice, we observe that the model can memorize the final answer and output it even though the reasoning path itself is incorrect. To mitigate this issue, we include some heuristic filtering process. First, we delete all the trajectories in the winning set where the returned messages in the second last round indicate the code is with some bugs, but the models just ignore it and predict the ground-truth answer. Then, we delete the responses in both the winning set $G^w$ and losing set $G^l$ if they are longer than 2048 tokens. Finally, we randomly sample a trajectory from the $G^w$ and a trajectory from $G^l$ to construct a pair or to add them into the training set of KTO algorithm. For each iteration, we typically get 15K-20K samples because some of the prompts may not have any correct answer. We notice that it is possible to leverage AI feedback like Gemini \citep{team2023gemini} or GPT4 \citep{OpenAI2023GPT4TR} to further verify the correctness of the trajectory step by step or construct a PRM \citep{lightman2023let, wang2023math} to rank the trajectories, which we leave for future work.

\paragraph{Implementation of M-DPO and M-KTO.} To implement the M-DPO, we simply set the labels of all the user-turn tokens to be -100 and mask the log-probability in the subsequent loss computation. We train the model for 1 epoch at most and tune the learning rate in \{2e-7, 4e-7, 7e-7, 1e-6\} with the first iteration of iterative training. Eventually, the learning rate of 4e-7 is used for Gemma-1.1 models and 2e-7 is used for Gemma-2 model and Mistral model. The global batch size is 32 with a warm-up step of 40. We evaluate the model every 50 training steps by the split prompt set and the best model is typically obtained between 150 steps to 600 steps, which is expected because the prompts for SFT and prompts for RLHF are overlapped. This has also been observed in previous work of RLHF for making general chatbot \citep{lin2023speciality}. Further exploration of prompt scaling is also left for future work. The hyper-parameters are of M-KTO are mostly the same as the M-DPO. We also set the $\lambda_+ = \lambda_- = 1$ following the original KTO paper \citep{ethayarajh2024kto}. The RLHF experiments of this paper are run with 8xA100 80G GPUs, where an additional machine with 8xA100 40G GPUs is also used for data collection and model evaluation. The main experiment of this paper can be reproduced by 24 - 48 hours with this setup. We defer some other implementation details to Appendix~\ref{appendix:implementation} due to space constraint.

\subsection{Main Results}

We evaluate the models in the zero-shot setting and report the main results in Table~\ref{tab:main_result}. 

\paragraph{Competitors.} The existing literature mainly focuses on the synthetic data generation and teach the models to use the external tool by supervised fine-tuning on the collected data. We use the results from \citet{toshniwal2024openmathinstruct} as baselines because we use the same SFT dataset so the results are generally comparable. For the CoT baselines, we use the Wizardmath models from \citet{luo2023wizardmath}. We also include the reward ranked fine-tuning (RAFT) as a baseline \citep{dong2023raft}, which is also known as rejection sampling fine-tuning in the literature \citep{touvron2023llama}. RAFT first collects N trajectories per prompt, filters the low-quality data (by reward function), and fine-tune on the selected trajectories. Another baseline is the single-turn online iterative DPO and KTO \citep{rafailov2023direct, ethayarajh2024kto}, which ignores the problem structure (i.e., the external messages) and treats the trajectory as a whole. In implementation, it means that we do not mask the user turn and the tokens of external messages also contribute to the loss.

\begin{table}[htp]
    \centering \small
    \caption{Main results of different methods on the test sets of GSM8K and MATH. The SFT training with external tool is based on (a subset of) Open-MathInstruct so the results are generally comparable to the previous SFT models.     
    $\dagger$: the model also serves as the starting checkpoint of other methods except for prompting and CoT without tool use. All the models are allowed to use code interpreter except for the CoT without tool use. The results of the CoT methods are borrowed from the technical reports \citep{toshniwal2024openmathinstruct, gou2023tora}. The gains relative to the SFT starting checkpoint are marked by \up{}. } \vspace{4pt}\label{tab:main_result}
    \begin{tabular}{ccc|ccc}
    \toprule
    \textbf{Base Model} & \textbf{Method} & \textbf{with Tool} & \textbf{GSM8K} & \textbf{MATH} & \textbf{AVG}\\ \midrule
    WizardMath-7B & SFT for CoT & \textcolor{red}{\ding{55}} & 54.9 & 10.7 & 32.8 \\
    WizardMath-13B & SFT for CoT & \textcolor{red}{\ding{55}} & 63.9 & 14.0 & 39.0\\
    WizardMath-70B & SFT for CoT & \textcolor{red}{\ding{55}} & 81.6 & 22.7 & 52.2\\\midrule
    CodeLLaMA-2-7B & SFT &\textcolor{green}{\ding{51}} & 75.9 & 43.6  & 59.8 \\ 
    CodeLLaMA-2-13B & SFT & \textcolor{green}{\ding{51}} &  78.8 & 45.5 & 62.2 \\ 
    CodeLLaMA-2-34B & SFT & \textcolor{green}{\ding{51}} &80.7 & 48.3 & 64.5 \\ 
    LLaMA-2-70B & SFT & \textcolor{green}{\ding{51}} & 84.7 & 46.3 & 65.5 \\
    CodeLLaMA-2-70B & SFT & \textcolor{green}{\ding{51}} & 84.6 & 50.7 & 67.7\\ 
    \midrule
    Gemma-1.1-it-7B & SFT$^\dagger$ & \textcolor{green}{\ding{51}}& 77.5 & 46.1 & 61.8 \\
    Gemma-1.1-it-7B & RAFT & \textcolor{green}{\ding{51}} & 79.2 & 47.3 & 63.3\\
    Gemma-1.1-it-7B & Iterative Single-turn DPO &\textcolor{green}{\ding{51}} & 81.7 & 48.9	&65.3\\
    Gemma-1.1-it-7B & Iterative Single-turn KTO &\textcolor{green}{\ding{51}} & 80.6 & 49.0	& 64.8 \\
 Gemma-1.1-it-7B & Iterative M-DPO + fixed reference  & \textcolor{green}{\ding{51}} & 79.9  & 
    48.0& 64.0\\
    Gemma-1.1-it-7B & M-DPO Iteration 1 & \textcolor{green}{\ding{51}}& 81.5 & 49.1	& 65.3\\
    Gemma-1.1-it-7B & M-DPO Iteration 2 & \textcolor{green}{\ding{51}} & 82.5 & 49.7 & 66.1	\\
   \rowcolor[rgb]{ .867, .922, .969} Gemma-1.1-it-7B & M-DPO Iteration 3 & \textcolor{green}{\ding{51}} & {83.9} \up{6.4} & 51.2 \up{5.1} & 67.6 \up{5.8}  \\
      \rowcolor[rgb]{ .867, .922, .969} Gemma-1.1-it-7B & Iterative M-KTO & \textcolor{green}{\ding{51}} &  82.1 \up{4.6} &  49.5 \up{3.4} & 65.8 \up{4.0} \\\midrule
    CodeGemma-1.1-it-7B & SFT$^\dagger$ & \textcolor{green}{\ding{51}} &77.3 & 46.4 &61.9  \\
    CodeGemma-1.1-it-7B & RAFT & \textcolor{green}{\ding{51}} & 78.8 & 48.4 & 63.6 \\
    CodeGemma-1.1-it-7B & Iterative Single-turn DPO & \textcolor{green}{\ding{51}} & 79.1 & 48.9 & 64.0 \\
    CodeGemma-1.1-it-7B & Iterative Single-turn KTO & \textcolor{green}{\ding{51}} & 80.2 & 48.6 & 64.4\\
  \rowcolor[rgb]{ .867, .922, .969}  CodeGemma-1.1-it-7B & Iterative M-DPO  & \textcolor{green}{\ding{51}} & {81.5} \up{4.2} & {50.1} \up{3.7}	& 65.8 \up{4.0} \\
  \rowcolor[rgb]{ .867, .922, .969}  CodeGemma-1.1-it-7B & Iterative M-KTO  &  \textcolor{green}{\ding{51}} & 81.6 \up{4.3} & 49.6 \up{3.2} & 65.6	\up{3.8}\\
    \midrule
    Mistral-7B-v0.3 & SFT$^\dagger$ & \textcolor{green}{\ding{51}} & 77.8 & 42.7 & 60.3 \\
    Mistral-7B-v0.3 & RAFT & \textcolor{green}{\ding{51}} & 79.8 & 43.7 & 61.8 \\
    Mistral-7B-v0.3 & Iterative Single-turn DPO & \textcolor{green}{\ding{51}} & 79.8 & 45.1& 62.5	\\
    Mistral-7B-v0.3 & Iterative Single-turn KTO & \textcolor{green}{\ding{51}} & 81.3 & 46.3 & 63.8	\\
   \rowcolor[rgb]{ .867, .922, .969} Mistral-7B-v0.3 & Iterative M-DPO  & \textcolor{green}{\ding{51}} & {82.3} \up{4.5} & {47.5} \up{4.8} & 64.9	\up{4.7}\\
   \rowcolor[rgb]{ .867, .922, .969} Mistral-7B-v0.3 & Iterative M-KTO  & \textcolor{green}{\ding{51}} & 81.7 \up{3.9} & 46.7 \up{4.0} & 64.2 \up{4.0}	\\
        \midrule
    Gemma-2-it-9B & SFT$^\dagger$ & \textcolor{green}{\ding{51}} & 84.1 & 51.0  & 67.6 \\
    Gemma-2-it-9B & RAFT & \textcolor{green}{\ding{51}} & 84.2 & 52.6 & 68.4 \\
    Gemma-2-it-9B & Iterative Single-turn DPO & \textcolor{green}{\ding{51}} & 85.2 & 53.1 & 69.2\\
    Gemma-2-it-9B & Iterative Single-turn KTO & \textcolor{green}{\ding{51}} & 85.4 & 52.9 & 69.2 \\
  \rowcolor[rgb]{ .867, .922, .969}  Gemma-2-it-9B & Iterative M-DPO  & \textcolor{green}{\ding{51}} & \textbf{86.3} \up{2.2} & \textbf{54.5} \up{3.5}& 70.4 \up{2.9} \\
    \rowcolor[rgb]{ .867, .922, .969}  Gemma-2-it-9B & Iterative M-KTO  & \textcolor{green}{\ding{51}} & 86.1 \up{2.0} & 54.5 \up{3.5}	& 70.3 \up{2.8} \\
    \bottomrule
    \end{tabular}
    \end{table}

From the first two sections in Table~\ref{tab:main_result}, we first observe that the tool-integrated LLMs significantly outperform their CoT counterparts with only SFT, demonstrating the benefits of leveraging external tools. In the subsequent discussions, we focus on the comparison within the scope of tool-integrated LLMs. 

\paragraph{Iterative M-DPO and M-KTO considerably improve the SFT models.} We observe that for all the four base models, after the iterative training with M-DPO or M-KTO, the resulting model outperforms their starting SFT checkpoint with considerable margins on both GSM8K and MATH. In particular, with M-DPO, the aligned Gemma-1.1-it-7B model attains accuracies of 83.9\% and 51.2\% on GSM8K and MATH, respectively, and is comparable to the open-source Open-MathInstruct-finetuned CodeLLaMA-2-70B (slightly worse on GSM8K but also slightly better on MATH). Moreover, the aligned Gemma-2-it-9B model achieves accuracies of 86.3\% and 54.5\% on GSM8K and MATH, surpassing all of the open-source models trained with Open-MathInstruct in the 7B to 70B range. Overall, our framework can robustly further boost the tool-integrated models' ability on the top of supervised fine-tuning. 

\paragraph{Iterative M-DPO and M-KTO surpass existing RLHF baselines.} We also observe that the iterative M-DPO and M-KTO surpass other existing RLHF baselines. First, they consistently and significantly outperform the RAFT algorithm across all four base models, which is known to be a robust and competitive baseline in the literature \citep{dong2023raft, yuan2023scaling}. This is because the RAFT algorithm only utilizes the positive signal by imitating the correct trajectories, while the DPO-based and KTO-based algorithms further leverage the negative signal from those incorrect trajectories. We note that the SFT stage in our pipeline can also be viewed as an application of RAFT, an idea that further dates back to expert iteration \citep{anthony2017thinking}. Consequently, our results should be interpreted to be that on the top of the first stage of SFT, algorithms with negative signal are more sample efficient. Moreover, while the online iterative single-turn DPO (KTO) \citep{xiong2024iterative, xu2023some} also gives a boost performance, it is generally worse than the multi-turn version. This suggests that learning to predict the off-policy external messages returned by the code interpreter usually has a negative impact on the reasoning ability improvement. Essentially, this corresponds to the fact that when deriving the optimality condition of the KL-regularized optimization problem, we are not allowed to optimize the external messages. Meanwhile, we present a representative example we encounter in Table~\ref{tab:example_observation}, where LLMs generate poorly constructed code resulting in anomalous and lengthy external messages. Forcing LLMs to learn to predict these messages can significantly hurt the model's reasoning abilities.

\paragraph{Iterative training and reference update lead to better performance.} We use the Gemma-1.1-it-7B and M-DPO as a representative example and observe that the model benefits from online iterative training, where the test accuracy of GSM8K improves from 77.5\% (SFT) to 81.5\% (iter 1) to 82.5\% (iter2) to 83.9\% (iter3), and the test accuracy of MATH improves from 46.1\% (SFT) to 49.1\% (iter 1) to 49.7\% (iter2) to 51.2\% (iter3). This is consistent with our theoretical insight that iterative training allows the models to explore the underlying space and learn the optimal policy progressively. Moreover, we observe that if we fix the reference model as the SFT policy, the final model performance is much worse compared to the case where we update the reference model as the current model at every iteration. We suspect that this is because this version of algorithm essentially optimizes the non-regularized reward and the reward in the mathematical reasoning task is more accurate than those in the general chat task, leading to the superior in-domain performance. We defer a more detailed ablation study on the impact of KL regularization to next subsection.

\begin{figure}[H]
    \centering
    \includegraphics[width=0.49\linewidth]{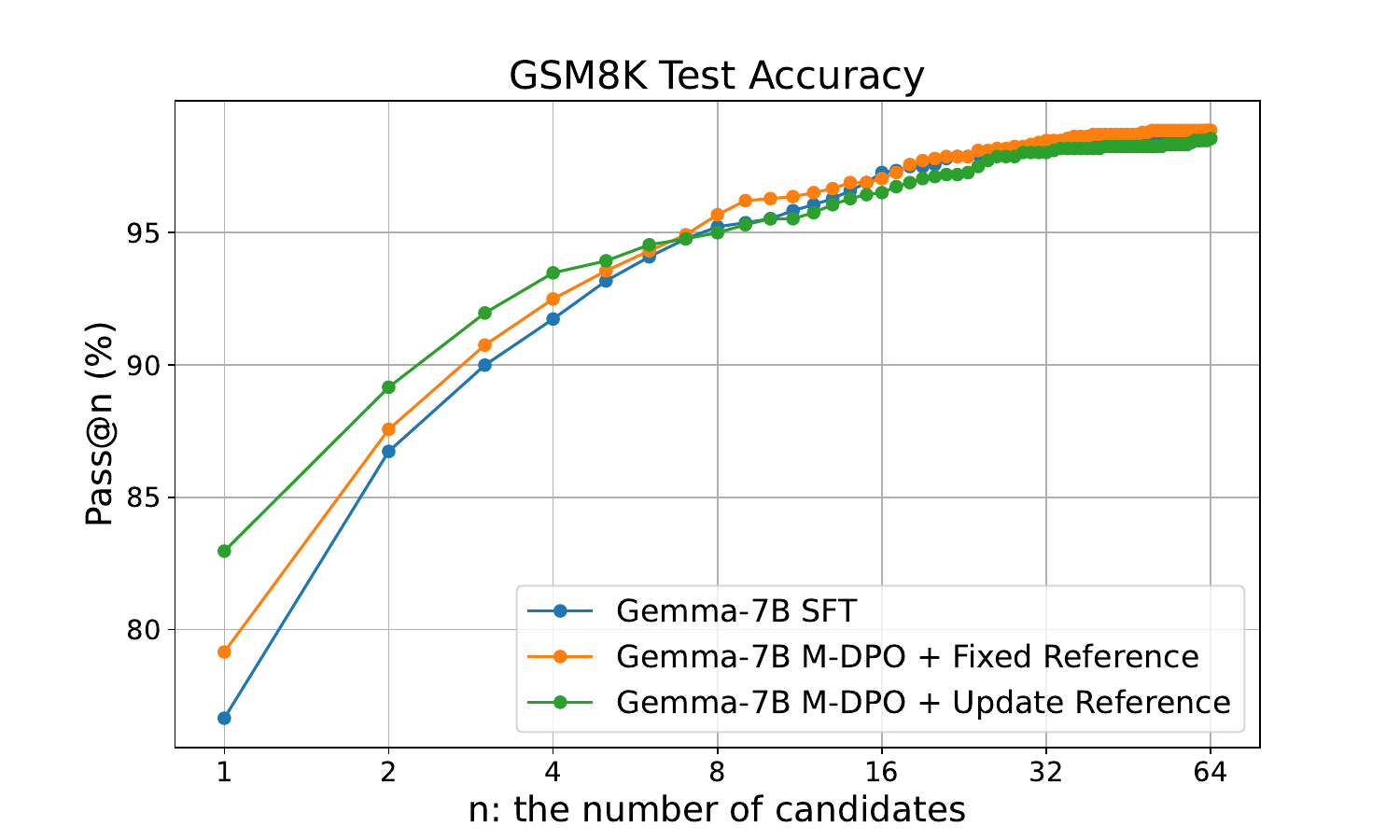}
    \includegraphics[width=0.49\linewidth]{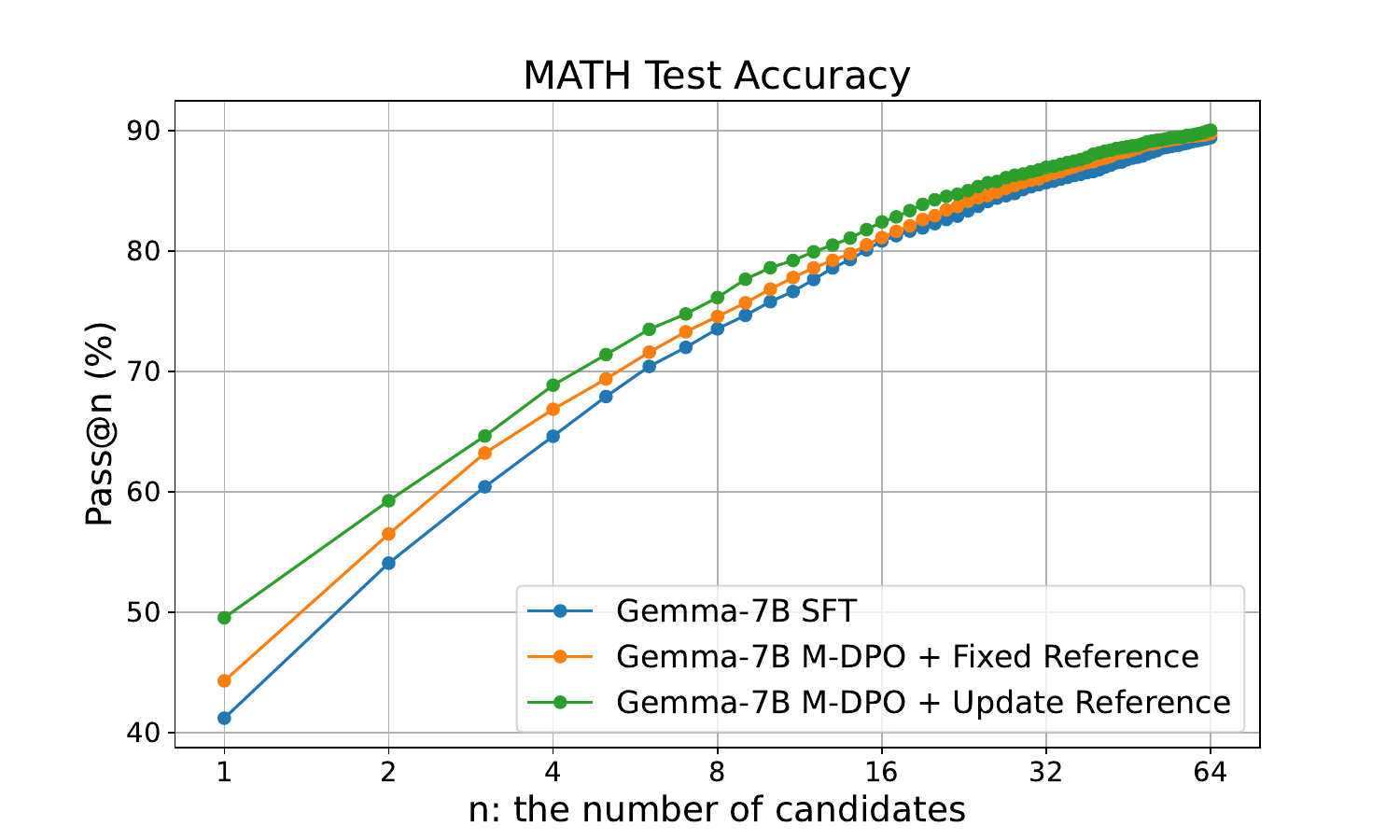}
    \caption{The pass@n rate with respect to the number of candidates n. We evaluate the models using temperature 0.7 following the previous works \citet{shao2024deepseekmath, toshniwal2024openmathinstruct}. We notice that preference learning only improves the metric pass@n when n is relatively small.}
    \label{fig:passn}
\end{figure}

\paragraph{Preference learning improves pass@n only when n is relatively small.} We plot the pass@n accuracy in terms of the number of candidate trajectories n in Figure~\ref{fig:passn}. To evaluate the pass@n, for each question, we independently sample n trajectories, and the question is considered to be solved if there \textit{exists} at least one trajectory with the correct final answer. We observe that the preference learning only improves the pass@n when n is relatively small. In particular, when $n>16$, all the models perform similarly on both GSM8K and MATH. In other words, the iterative M-DPO does not inject new knowledge but elicits the models' knowledge acquired in pre-training and SFT stages by boosting the quality of Top n responses. The observation is consistent with that of \citet{shao2024deepseekmath}, which studies the DRL-based GRPO method for the CoT mathematical reasoning task. Therefore, the success of preference learning is on top of a well-trained SFT model. We expect that the final model performance can be further improved with more high-quality SFT data.
    
\subsection{Ablation Study and Discussion}
We conduct ablation studies in this subsection for a more comprehensive understanding of the proposed algorithm.

\paragraph{A moderate level of KL regularization balances the per-iteration improvement and exploration efficiency.}  The effectiveness of (iterative) DPO is significantly influenced by the choice of reference model and the KL coefficient. Previous research by \citet{tunstall2023zephyr} on offline DPO for general chatbot applications suggests that a lower KL coefficient, specifically 0.01, yields superior performance by allowing the resulting model to move more far away from the SFT model $\pi_0$. Meanwhile, for online iterative training, \citet{xiong2024iterative, dong2024rlhf} adopt a fixed reference model of $\pi_0$, and achieves continuous improvements as the training iterates. In our ablation study, we consider two different choices: (1) using the fixed reference model $\pi_0$; (2) updating the reference model to the last iteration's model at each round. Moreover, we search the KL coefficient $\eta \in \{0.01, 0.1, 0.5\}$. The results are summarized in Table~\ref{tab:ablation_kl}. First, we notice that if we update the reference model at each iteration, the final model outperforms the one with a fixed reference model $\pi_0$ with a large margin. Essentially, this dynamic approach optimizes the non-regularized reward, while the one with a fixed reference model $\pi_0$ aims to maximize the KL-regularized reward. This can be viewed as a trade-off between the generation diversity and reward optimization. We suspect this performance difference is because for reasoning task, the correct reasoning paths are highly concentrated on a small subset of the generation space, and the diversity is less important in this case.

We also find that the strongest model is obtained by a moderate KL coefficient of $0.1$, outperforming both 0.01 and 0.5. To understand this phenomena, we plot the test accuracy of GSM8K in Figure~\ref{fig:gsm8k} along the way of iterative training. As we can see, for the first iteration, the results align with \citet{tunstall2023zephyr}'s findings, where a smaller KL coefficient leads to a larger model improvement. However, the resulting intermediate model is further used to collect trajectories for subsequent iterative training. The models trained with very low KL coefficients tend to lose diversity rapidly, potentially reducing their capacity to collect diverse trajectories for subsequent training, leading to diminishing gains in the second and third iterations. In contrast, a higher KL coefficient of 0.5 imposes strong regularization between the resulting model and the reference model, and the model improvement is less compared to that of $0.1$ for each iteration. To summarize, for online iterative training, we need to strike a balance between the per-iteration improvement and exploration efficiency to optimize the overall performance. We will see that such an intuition also extends to the choices of sampling strategy choice and other experimental tricks.

\begin{table}[H]
    \centering 
    \caption{Ablation study of the impact of KL regularization. The SFT policy is the starting checkpoint for all other experiments.}\vspace{4pt}\label{tab:ablation_kl}
    \begin{tabular}{cc|ll}
    \toprule
    \textbf{Model} & \textbf{Method} & \textbf{GSM8K} & \textbf{MATH} \\ \midrule
    Gemma-1.1-it-7B & SFT 3 epoch & 77.5 & 46.1  \\
    Gemma-1.1-it-7B & Iterative M-DPO + $\eta=0.01$  & 81.7  & 50.1 	\\
    Gemma-1.1-it-7B & Iterative M-DPO + $\eta=0.1$ & 83.9 & 51.2	\\
    Gemma-1.1-it-7B & Iterative M-DPO + $\eta=0.5$  & 82.8  & 49.7 	\\
\midrule
    Gemma-1.1-it-7B & Iterative M-DPO + fixed reference + $\eta=0.1$  & 79.9  & 48.0 	\\
    \bottomrule
    \end{tabular}
    \end{table}

\begin{figure}[H]
    \centering
    \includegraphics[width=0.8\linewidth]{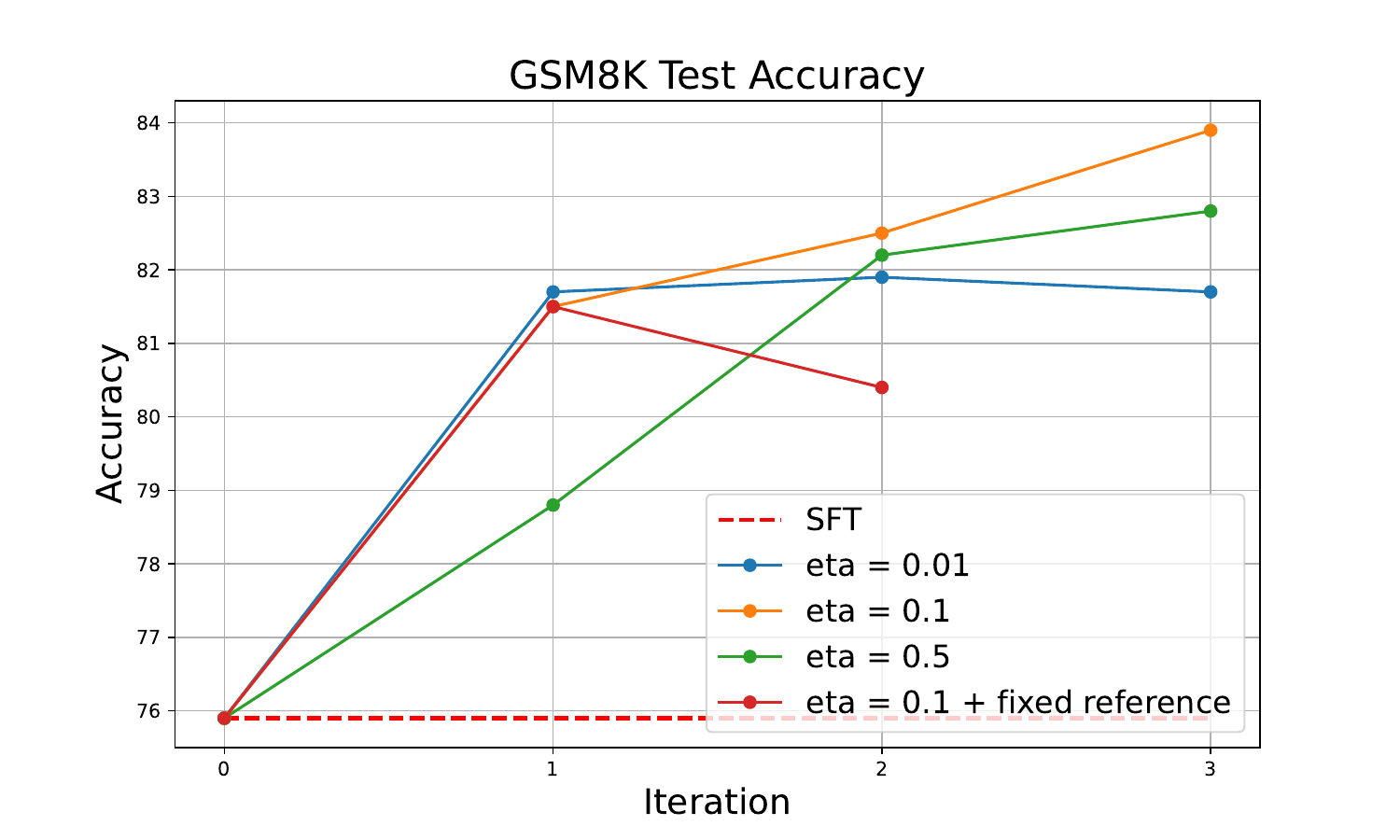}
    \caption{The plot of test accuracy on GSM8K dataset and iterations with different levels of KL regularization.}
    \label{fig:gsm8k}
\end{figure}

\paragraph{The impact of sampling strategy: data diversity and coverage are crucial.} Throughout our iterative training process of the Gemma-1.1-it-7B, we observed a steady increase in the percentage of correct trajectories—from 47\% in the first iteration to 76\% in last iteration. Moreover, since we update the reference model at each iteration, the diversity of the generated trajectories also decrease rapidly. However, the diversity of the collected data is critical for DPO/KTO training due to their contrastive nature. Prior studies in online iterative DPO for general chatbots \citep{dong2024rlhf} recommend employing model variants with different sampling temperatures or training steps to enhance trajectory diversity. Motivated by this, we explored two data collection strategies: (1) on-policy sampling, where all trajectories are sampled using the current policy, and (2) mixture sampling, where 20 trajectories are collected using the current model and 10 from the last iteration's model. We report the results in Table~\ref{tab:misc_ablation}, where with mixture sampling, the final model performance considerably outperform the one with only on-policy sampling. To understand this phenomena, we plot the MATH test accuracy in terms of the iteration in Figure~\ref{fig:math_sampling}. We observe that on-policy sampling fails to improve MATH test accuracy in the third iteration, while we achieve considerable gain with the mixture sampling. This again demonstrates the importance of the diversity of the collected responses in the iterative training and also aligns with previous findings that advanced exploration strategies, which prevent diversity collapse, provide more meaningful signals for iterative preference learning \citep{bai2022training, touvron2023llama, xiong2024iterative, pace2024west, dong2024rlhf}. It would be interested to explore more advanced exploration strategy like Monte Carlo tree search (MCTS) in the future study. 

In our experiments, we collected N trajectories per prompt to ensure the presence of both correct and incorrect reasoning paths for constructing the comparison pair. A larger N generally leads to a better coverage of the prompt set because for some difficult problem, we need to sample more responses to find a correct reasoning path. For instance, in iteration 1, with N=30, 92.5\% of the prompts are covered, compared to 83.0\% for N=12 and 60\% for N=6. See Figure~\ref{fig:passn} for an illustration of the relationship between pass@1 and N. However, increasing N also incurs higher computational costs. To understand the impact of the parameter N, we conduct an ablation study with $N \in \{6, 12, 30\}$ and summarize the results in Table~\ref{tab:ablation_N}. We observe a substantial performance boost when increasing N from 6 to 12, reflecting a better coverage of the complex problems that require more attempts to find a correct path. In contrast, from N=12 to N=30, we only get very minor improvement in the test accuracy, suggesting that the incremental benefits of increasing N in best-of-N sampling diminish rapidly.

\begin{table}[H]
    \centering 
    \caption{Ablation study of the impact of sampling strategy. The SFT policy is the starting checkpoint for all other experiments. Mixture sampling is adopted for the iterative M-DPO training by default and we run for three iterations in total.}\vspace{4pt}\label{tab:ablation_N}
    \begin{tabular}{cc|cc}
    \toprule
    \textbf{Model} & \textbf{Method} & \textbf{GSM8K} & \textbf{MATH} \\ \midrule
    Gemma-1.1-it-7B & SFT 3 epoch &  77.5 & 46.1  \\
    Gemma-1.1-it-7B & Iterative M-DPO with N=30 	& 83.9 & 51.2\\
    Gemma-1.1-it-7B & Iterative M-DPO with N=12 & 83.5 & 51.2  \\
    Gemma-1.1-it-7B & Iterative M-DPO with N=6 & 82.0 & 49.2 \\
    Gemma-1.1-it-7B & Iterative M-DPO with N=30 + On-policy sampling 	& 83.1 & 49.5\\
    \bottomrule
    \end{tabular}
    \end{table}

\begin{figure}[H]
    \centering
    \includegraphics[width=0.8\linewidth]{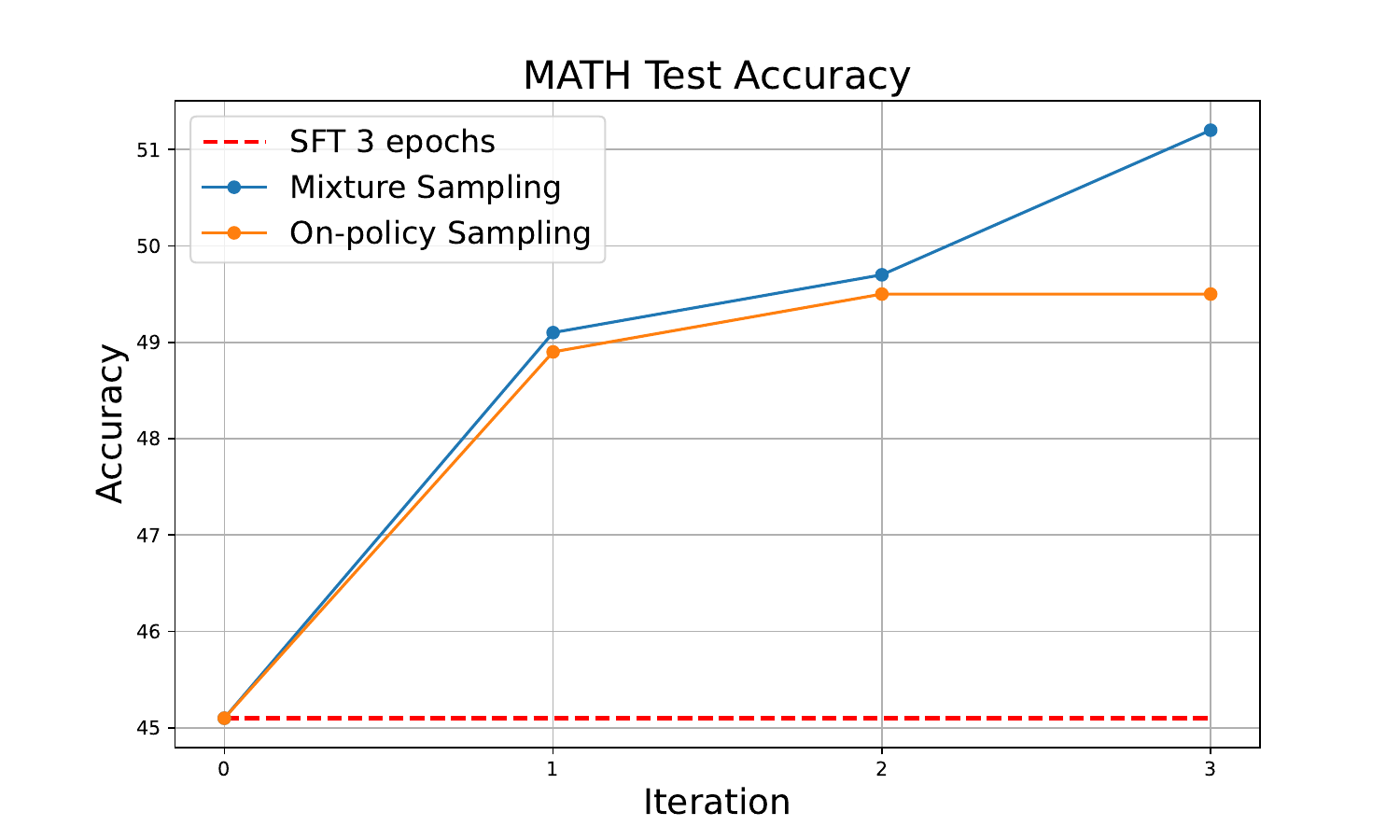}
    \caption{The plot of test accuracy on MATH dataset in terms of training iterations with different sampling strategies.}
    \label{fig:math_sampling}
\end{figure}


\paragraph{The best model is obtained with starting checkpoint fine-tuned with more than 1 epochs.} \citet{tunstall2023zephyr} finds that if the SFT model is trained for more than one epoch, the subsequent DPO training will lead to performance regression with longer training in terms of instruction-following ability and benchmark for a general chatbot. In other words, there exists a trade-off between the SFT training epochs and the DPO training steps. Moreover, the best model is obtained by SFT for one epoch in their practice. We also conduct an ablation study on the impact of the SFT epoch and summarize the results in Table~\ref{tab:ablation_SFT}. Consistently across all tested scenarios, the subsequent iterative M-DPO training leads to considerable model improvement compared to the SFT model. Meanwhile, we also observe a similar trade-off between SFT and RLHF training because with more SFT epochs, the gains from the RLHF stage decrease. However, in our case, the strongest model is obtained with three epochs of SFT, followed by fine-tuning through iterative M-DPO, which is different from the offline DPO training \citep{tunstall2023zephyr} or the iterative DPO for general chatbot \citep{dong2024rlhf} with only one epoch of SFT.

\begin{table}[H]
    \centering 
    \caption{Ablation study of the impact of SFT epoch. Mixture sampling is adopted for the iterative M-DPO training and we run for three iterations in total. The gains relative to their starting SFT checkpoints are marked by \up{}. } \vspace{4pt}\label{tab:ablation_SFT}
    \begin{tabular}{cc|ll}
    \toprule
    \textbf{Model} & \textbf{Method} & \textbf{GSM8K} & \textbf{MATH} \\ \midrule
    Gemma-1.1-it-7B & SFT 1 epoch & 75.1 & 41.1\\
    Gemma-1.1-it-7B & SFT 1 epoch + Iterative M-DPO   & 80.6 \up{5.5} & 46.7 \up{5.6} \\
    Gemma-1.1-it-7B & SFT 2 epoch & 75.3 & 44.0  \\
    Gemma-1.1-it-7B & SFT 2 epoch + Iterative M-DPO   & 82.4 \up{7.1} & 49.8 \up{5.8}	 \\
    Gemma-1.1-it-7B & SFT 3 epoch & 77.5 & 46.1  \\
    Gemma-1.1-it-7B & SFT 3 epoch + Iterative M-DPO   & 83.9 \up{6.4} & 51.2 \up{5.1}	\\
    \bottomrule
    \end{tabular}
    \end{table}

\paragraph{NLL loss helps when the SFT model is substantially underfitting.} The recent work \citet{pang2024iterative} has introduced iterative RPO, specifically aimed at enhancing Chain of Thought (CoT) capabilities for solving mathematical problems. A key feature of this approach is the inclusion of an additional negative log-likelihood (NLL) loss for the preferred response. The main intuition for adding the NLL loss is that the original DPO algorithm \citep{rafailov2023direct} tends to reduce the likelihood of the preferred responses, and this is believed to hurt the reasoning ability \citep{mint2024a}.  Motivated by their results, we explored the applicability of this idea to our setup. We conduct an ablation study by adding the NLL loss into the iterative M-DPO training and observe performance regression as reported in Table~\ref{tab:misc_ablation}. We observe that the best model is obtained in the second iteration if we add the additional NLL loss even though we use the mixture sampling to increase the diversity of the collected data. With time-weighted exponential moving average for smoothing training record, we observe that the log probability of the chosen responses and rejected responses are (-126, -222) at the 200th step of the third iteration training when we add the NLL loss, as compared to (-166, -350) in the case without the NLL loss. This is consistent with the result of \citet{pang2024iterative} where with the additional NLL loss, both the log probability of chosen responses and that of rejected responses increase. These evidences indicate that the NLL loss further contributes to the model distribution collapse and eventually hurt the overall performance of online iterative learning. Finally, we notice that the additional NLL loss can be viewed as an implementation of the pessimistic principle \citep{liu2024provably}. This also explains its inferior in-domain performance though it may be helpful to stable the training, which requires more in-depth studies.

However, one distinct feature between our setup and \citet{pang2024iterative} is whether we first fine-tune the initialized SFT model with in-domain data. To further understand the phenomena, we fine-tune the Gemma-1.1-it-7B with only 100 steps (so that the model knows to leverage Python code to solve the problem) as the starting checkpoint of preference learning and conduct an ablation study with the NLL loss using this model. We observe when the SFT model is substantially underfitting, the addition of NLL loss actually enhances performance. This scenario mirrors the findings of \citet{pang2024iterative}, who utilized a general LLaMA2-70B-chat model \citep{touvron2023llama} without firstly fine-tuning on the in-domain data. Our observations align with prior research in the context of developing general chatbots \citep{lin2023speciality}, which suggests that RLHF is less effective without preliminary SFT.

\begin{table}[H]
    \centering 
    \caption{Other ablation studies. Mixture sampling is adopted for the iterative M-DPO training and we run for three iterations in total. The gains relative to the iterative M-DPO are marked by \up{}. }\vspace{4pt}\label{tab:misc_ablation}
    \begin{tabular}{cc|ll}
    \toprule
    \textbf{Model} & \textbf{Method} & \textbf{GSM8K} & \textbf{MATH} \\ \midrule
    Gemma-1.1-it-7B & SFT 3 epoch & 77.5 & 46.1  \\
    Gemma-1.1-it-7B & SFT 3 epoch + Iterative M-DPO   & 83.9  & 51.2 	\\
    Gemma-1.1-it-7B & Iterative M-DPO with NLL loss & 81.7 \down{2.2} &  49.5 \down{1.7} \\
\midrule
    Gemma-1.1-it-7B & SFT 100 steps & 50.8 & 23.7\\
    Gemma-1.1-it-7B & + M-DPO Iteration 1  & 57.8 & 27.9   \\
    Gemma-1.1-it-7B & + M-DPO and NLL loss Iteration 1 & 61.0 \up{3.2} & 30.1 \up{2.2}  \\    
    \bottomrule
    \end{tabular}
\end{table}

\paragraph{On-policy sampling and small learning rate mitigate the probability drops in preferred responses.} In the literature, the Direct Preference Optimization (DPO) algorithm is often reported to diminish reasoning capabilities by reducing the likelihood of preferred responses \citep{yuan2024advancing, hong2024orpo, meng2024simpo}. In our preliminary experiments, we also observe similar phenomena with a large learning rate (1e-6), where the model's reasoning ability collapses after only a few training steps, preventing convergence to good reasoning performance. In contrast, we find that using on-policy sampling within our online iterative training framework, coupled with a smaller learning rate (2e-7 or 4e-7), the DPO algorithm enhances the model's reasoning abilities. To interpret our observation, we can first write down the gradient of the DPO as follows:
$$\begin{aligned}
\nabla_\theta \mathcal{L}_{DPO}(\pi_\theta, \pi_{\reff}) =-\eta\cdot \sigma\Big( r_\theta(x, y^l) - r_\theta(x, y^w) \Big) \Big[\frac{1}{\pi_\theta(y^w|x)} \nabla_\theta \pi_\theta(y^w|x) - \frac{1}{\pi_\theta(y^l|x)} \nabla_\theta \pi_\theta(y^l|x)  \Big],
\end{aligned}
$$
where $r_\theta(x,y) = \eta \log \frac{\pi_\theta(x,y)}{\pi_{\reff}(x,y)}$ is the implicit reward and we use the single-turn one for simplicity. In practice, the probability of the rejected responses typically decrease, and their gradient quickly dominates when $\pi_\theta(y^l|x) << \pi_\theta(y^w|x)$ and the optimization becomes unlearning of the rejected responses. In this case, the probability of the chosen responses cannot increase. This phenomenon was also discussed in the blog \citet{guo2024alignment}. When we adopt on-policy sampling, it leads to a relatively large probability for both rejected and chosen responses at the initial stage, ensuring that both gradients remain valid and effective. Moreover, a small learning rate prevents the model from deviating too significantly, maintaining the effectiveness of both gradients. We also notice that for the KTO algorithm, the preferred responses and the rejected responses do not appear in pairs. We suspect that the probability of the preferred response increases because the gradients of the rejected response do not dominate in every mini-batch of data. A more comprehensive understanding of the training dynamic of the direct preference learning algorithms remains largely open and we leave a more detailed study of this phenomena to future study.

\section{Conclusion, Limitation, and Future Research Direction}

We demonstrate that preference learning, as an alternative learning paradigm to supervised fine-tuning, can further boost the performance of the tool-integrated reasoning LLMs on top of iterative best-of-n fine-tuning. We introduce an online iterative multi-turn direct preference optimization algorithm and validate its effectiveness through extensive experimentation across various base models. Our results indicate substantial improvements in the pass@1 metric over the SFT policy, as evidenced by performance gains on standard benchmarks such as GSM8K \citep{cobbe2021gsm8k} and MATH \citep{hendrycks2021measuring}. Additionally, we also conduct various ablation studies to show that achieving optimal performance requires a careful balance between per-iteration improvement and exploration, facilitated by moderate levels of KL regularization and strategic exploration choices.

There are also several potential directions to further improve the model performance that we have not explored in this paper. Currently, our experiments only use final result check as the preference signal, so we cannot effectively compare trajectories that both end with correct or incorrect answers. Although our algorithm is designed for \textit{trajectory-level} preference learning, the reward signal in the Bradley-Terry model could be adapted to a step-wise level. In particular, we may leverage AI feedback to verify trajectories step by step or train a process-supervised reward model \citep{lightman2023let} to provide learning signals. Additionally, with more fine-grained reward signals, it is also possible to adopt more advanced heuristic exploration policy like west-of-n sampling, which prove to be effective in the practice of making general chatbot \citep{pace2024west, dong2024rlhf, xu2023some, snorkelai@pair} and Monte Carlo tree search (MCTS) \citep{xie2024monte}. Furthermore, it is also possible to leverage some well-established tricks like adaptive margin and length regularization for DPO training \citep{hong2024orpo, meng2024simpo}. These techniques have proven to be effective for achieving a better in-domain performance for the chat task. We expect that these more fine-grained preference signals and algorithmic designs can largely improve the models' performance.

Finally, while the direct preference learning algorithms show promising gains for the mathematical reasoning tasks with code interpreter, it is not directly applicable to the general agent learning with more complex and stochastic external environments or against dynamic opponents. In particular, it requires to construct a value network for involving an adaptive margin in the optimization target and take the randomness of the external environment into consideration. We leave the study of this more involved algorithm to the future work. Moving beyond the framework presented this paper, it is also possible to explore more general preference structures beyond the BT model \citep{munos2023nash, ye2024theoretical}. We hope that the insights from this paper will inspire further research in this direction, extending the utility of preference learning beyond the general structured chat tasks.

\section*{Acknowledgements}
Wei Xiong and Tong Zhang are partially supported by an NSF IIS grant No. 2416897


\bibliographystyle{abbrvnat}
\nobibliography*
\bibliography{myrefs}

\newpage
\appendix

\section{Notation Table}

\begin{table}[h]
\centering 
\small
\begin{tabular}{|c|c|}
\hline
\textbf{Notation} & \textbf{Description} \\
\hline
\hline
$x, \cX$ & The prompt and the prompt space. \\
$d_0$ & The distribution of initial state (prompt). \\
$s_h \in \cS, a_h \in \cA, o_h$ & The state, action, and observation. \\
$H$ & Episode length, e.g., the maximal number of tool calls. \\
$\PP^* = [\PP^*_h]_{h=1}^H$ & The true observation kernel.\\
$\tau = (x, y)$ & $\tau$ is a trajectory and $y$ is the completion part, i.e., we exclude $x$ from $\tau$.\\
$u^*$ & The true utility function associated with the BT model defined in Definition~\ref{def:bt_model}. \\
$\cM^* = (\cS, \cA, H, \PP^*, d_0, u^*)$ & The true model with observation kernel $\PP^*$ and utility function $u^*$\\
$\sigma(\cdot)$ & $\sigma(z) = 1/(1+\exp(-z))$ is the sigmoid function.\\
$z \in \{0, 1\}$ & Preference signal. \\
\hline
$\pi=[\pi_h]_{h=1}^H$ & The policy, which is parameterized by the LLM. \\
$\cM = (\cS, \cA, H, \PP, d_0, u)$ & One arbitrary environment with observation kernel $\PP$ and utility function $u$.\\
$\pi_{\reff} = [\pi_{\reff,h}]_{h=1}^H$ & One arbitrary reference policy.  \\
$J(\pi; \cM, \pi_{\reff})$ & The KL-regularized target (\eqref{eqn:optimal_policy}) with environment $\cM$ and reference $\pi_{\reff}$.\\
$\eta$ & The coefficient of KL penalty, defined in \eqref{eqn:optimal_policy}.\\
$Q_{\cM} = [Q_{\cM,h}]_{h=1}^H$ & The optimal $Q$-values associated with $J(\pi; \cM, \pi_{\reff})$, defined in \eqref{eqn:bellman_eqn0}.\\
$V_{\cM} = [V_{\cM,h}]_{h=1}^H$ & The optimal $V$-values associated with $J(\pi; \cM, \pi_{\reff})$, defined in \eqref{eqn:bellman_eqn}.\\
$\pi_\cM = [\pi_{\cM,h}]_{h=1}^H$ & The optimal policy associated with $J(\pi; \cM, \pi_{\reff})$, defined in \eqref{eqn:bellman_eqn}.\\
\hline
$\mathcal{L}_{\textup{M-DPO}}(\cdot)$ & M-DPO loss, defined in \eqref{eqn:final_m_dpo_loss}.\\
$\mathcal{L}_{\textup{M-KTO}}(\cdot)$ & M-KTO loss, defined in \eqref{eqn:final_m_kto_loss}.\\
\hline
$J(\pi)$ & The abbreviation of $J(\pi; \cM^*, \pi_0)$, defined in \eqref{eqn:val_target2}. \\
$\pi^* = [\pi^*_h]_{h=1}^H$ & The optimal policy associated with $J(\pi)$.\\
$\pi^1_t, \pi^2_t$ & The main and exploration policy at round $t$\\
$\text{Reg}(T)$ & Regret over horizon $T$, defined in \eqref{eqn:regret}.\\
$\cU, \cP$ & Known sets such that $u^*\in \cU$ and $\PP^*\in \cP$\\
$B$ & Assuming $u^*(x,y) \in [0,B], \forall (x,y)$.\\ 
$\hat{u}_t, \hat{\PP}_t$ & MLE of $u^*$ and $\PP^*$ at round $t$, defined in \eqref{eqn:mle_reward} and \eqref{eqn:mle_transition}.\\
$\tilde{\cU}_t, \tilde{\cP}_t$ & Confidences sets of $u^*$ and $\PP^*$ at round $t$, defined in \eqref{eqn:confidence}.\\
$c_1, c_2, c$ & Absolute constants.\\
$\kappa$ & $1/(2+ \exp(-B)+ \exp(B))$.\\
$d_\cU$ & Eluder coefficient from Definition~\ref{def:eluder_coefficient}.\\
$d_\cP, \xi(\cdot)$ & Generalized Eluder-type condition from Definition~\ref{def:eluder_condition}.\\
$\TV(\cdot, \cdot)$ & Total variation distance between two distributions.\\
\hline
\end{tabular}
\caption{The table of notations used in this paper.}
\label{tab:notation}
\end{table}

\section{Implementation Detail}
\label{appendix:implementation}

\paragraph{Tools in Math Problem Solving.}
Following \citet{gou2023tora, toshniwal2024openmathinstruct}, the LLM agent is allowed to call the python interpreter when it decodes a python code starting with ```python and ending with ```. For each step $h$, to generate the observation $o_h$, we leverage the python package \texttt{IPython}, and run all the codes in the history one by one and treat each code snippet as a Jupyter cell. We only return the standard output or the error message from the last snippet. When there exists some bug in the code, we only return the error message which is typically less than 20 tokens as in \citet{toshniwal2024openmathinstruct}. We notice that some works (e.g. \citet{shao2024deepseekmath}) also returns the first and the last 50 tokens of the trackback information.

\paragraph{Data Generation.} All the models are evaluated in the zero-shot setting. For all the data generation process, we adopt the following constraints: (1) for each turn, the model can generate up to 512 tokens; (2) the maximal number of steps is H=6; (3) the maximal number of generated token for each trajectory is 2048. When collecting new data for online iterative M-DPO, we set temperature to be 1.0 and decode without top-K or top-p sampling. For evaluation, greedy decoding is employed so that the results are generally comparable with previous works \citet{gou2023tora, toshniwal2024openmathinstruct}. For evaluating the models with pass@n rate, we follow \citet{toshniwal2024openmathinstruct} to adopt a temperature of $0.7$.

\paragraph{Data Annotation.} For each prompt, we first divide the responses into the winning set $G^w$ and the losing set $G^l$ by checking the final answer. In practice, we observe that the model can memorize the final answer and output it even though the reasoning path itself is incorrect. To mitigate this issue, we include some heuristic filtering process. First, we delete all the trajectories in the winning set where the returned messages in the second last round indicate the code is with some bugs, but the models just ignore it and predict the ground-truth answer. Then, we delete the responses in both the winning set $G^w$ and losing set $G^l$ if they are longer than 2048 tokens. Finally, we randomly sample a trajectory from the $G^w$ and a trajectory from $G^l$ to construct a pair or to add them into the training set of KTO algorithm. For each iteration, we typically get 15K-20K samples because some of the prompts may not have any correct answer. We notice that it is possible to leverage AI feedback like Gemini \citep{team2023gemini} or GPT4 \citep{OpenAI2023GPT4TR} to further verify the correctness of the trajectory step by step or construct a PRM \citep{lightman2023let, wang2023math} to rank the trajectories, which we leave for future work.

\paragraph{Python Experiment Environment.} We find that the evaluation can be influenced by the python environment, the precision (especially for the Gemma-1.1 models), and even the virtual machine we use. This does not affect the overall trend and conclusion because the magnitude of oscillation is relatively small compared to the overall improvement. For completeness, however, we specify some of the key package versions here. We use transformers 4.42.4, torch 2.3.0, sympy 1.2, antlr4-python3-runtime 4.11.0, IPython 8.26.0 for all models. We evaluate the models using torch.float and use vllm 0.5.0.post1 for most the experiments except for Gemma-2 where vllm 0.5.1 is required. The inconsistency of vllm version is because Gemma-2 model was not released when we performed the main experiments of this project. We fix the python environment and machine for our evaluation throughout the experiment. For SFT, we use the open-source axolotl project with version 0.4.1 and for online iterative preference learning and RAFT, we use the code base from RLHF Workflow \citep{dong2024rlhf}. 

\paragraph{RAFT implementation.} The data generation step is similar to the online iterative M-DPO training, except that we only keep the trajectories with correct final answer. For each prompt, we sample at most $k$ trajectories where we search $k \in \{1, 3, 8\}$ and use $k=1$ eventually because we do not see improvement by leveraging more data. We run the algorithm for three iterations in total. The training parameters are similar to the SFT stage, but we use a smaller batch size of 32 so that there are enough optimization steps. For Gemma models, we use a learning rate of 5e-6. For each training stage, we train the models for two epochs in total according to our parameter search. For Mistral model, we find that a smaller learning rate of 1e-6 and training for 1 epoch give us much better performance.

\paragraph{Prompt template.} We do not tune the prompt though we do observe that the prompt engineering can further improve the performance. For all the experiments, we simply adopt the chat template of the models as in Table~\ref{tab:multi_turn_math}.



\section{Omitted Theoretical Proofs}
\subsection{Proof of Proposition~\ref{prop:performance difference}}
\begin{proof}[Proof of Proposition~\ref{prop:performance difference}]
For one policy $\pi$, starting with $V^\pi_{\cM, H+1} = 0$, we recursively define its $V$-value and $Q$-value functions on one model $\cM = (\cS, \cA, H, \PP, d_0, u)$ and the reference policy $\pi_{\reff}$ as
\begin{align*}
Q^\pi_{\cM, h}(s_h, a_h) &:=\begin{cases}
    & u(s_H, a_H),  \qquad \text{ if } h = H,  \\
  & \E_{o_h \sim \PP_h(\cdot|s_h, a_h)} [V^\pi_{\cM, h+1}(s_{h+1})], \qquad \text{ if } h \leq H-1, 
\end{cases}\\
    V^\pi_{\cM, h}(s_h) &:= \E_{a_h \sim \pi_{h}(\cdot| s_h)} \big[Q^\pi_{\cM, h}(s_h, a_h) - \eta \cdot \KL\big(\pi_{h} (\cdot|s_h), \pi_{\reff, h}(\cdot|s_h)\big)\big].
\end{align*}
It is noted that with the optimal policy $\pi_\cM$, $Q_{\cM, h} = Q^{\pi_\cM}_{\cM, h}$ and $V_{\cM, h} = V^{\pi_\cM}_{\cM, h}$. In the following discussions, we exclusively focus on the model $\cM^* = (\cS, \cA, H, \PP^*, d_0, u^*)$ with abbreviations $Q^\pi_{h} = Q^\pi_{\cM^*, h}$ and  $V^\pi_{h} = V^\pi_{\cM^*, h}$.

For any comparator policy $\pi$, it holds that
\begin{align*}
    &J(\pi) - J(\hat{\pi})  = \E_{d_0}\left[V_1^\pi(s_1) - \hat{V}_1(s_1)\right] - \E_{d_0}\left[V_1^{\hat{\pi}}(s_1) - \hat{V}_1(s_1)\right],
\end{align*}
For any $h\in [H]$, we can obtain that
\begin{align*}
    &\E_{d_0, \pi_{1:h-1}, \PP^*_{1:h-1}}\left[V_h^\pi(s_h) - \hat{V}_h(s_h)\right] - \E_{d_0, \hat{\pi}_{1:h-1}, \PP^*_{1:h-1}}\left[V_h^{\hat{\pi}}(s_h) - \hat{V}_h(s_h)\right]\\
    & \overset{(a)}{=} \E_{d_0, \pi_{1:h-1}, \PP^*_{1:h-1}}\left[\E_{\pi_h}\left[Q_h^\pi(s_h, a_h)\right] - \eta \KL\left(\pi_h(\cdot|s_h),\pi_{\reff,h}(\cdot|s_h)\right)\right]  \\
    & - \E_{d_0, \pi_{1:h-1}, \PP^*_{1:h-1}}\left[ \E_{\hat{\pi}_h}\left[\hat{Q}_h(s_h, a_h)\right] - \eta \KL\left(\hat{\pi}_h(\cdot|s_h),\pi_{\reff,h}(\cdot|s_h)\right)\right]\\
    &- \E_{d_0, \hat{\pi}_{1:h-1}, \PP^*_{1:h-1}}\left[\E_{\hat{\pi}_h}\left[Q_h^{\hat{\pi}}(s_h, a_h)\right] - \eta \KL(\hat{\pi}_h(\cdot|s_h), \pi_{\reff,h}(\cdot|s_h))\right]\\
    &+ \E_{d_0, \hat{\pi}_{1:h-1}, \PP^*_{1:h-1}}\left[\E_{\hat{\pi}_h}\left[\hat{Q}_h(s_h, a_h)\right]- \eta \KL(\hat{\pi}_h(\cdot|s_h), \pi_{\reff,h}(\cdot|s_h))\right]\\
    & = \E_{d_0, \pi_{1:h}, \PP^*_{1:h-1}}\left[Q_h^\pi(s_h, a_h) - \hat{Q}_h(s_h, a_h)\right] - \E_{d_0, \hat{\pi}_{1:h}, \PP^*_{1:h-1}}\left[Q_h^{\hat{\pi}}(s_h, a_h) - \hat{Q}_h(s_h, a_h)\right]\\
    & + \E_{d_0, \pi_{1:h-1}, \PP^*_{1:h-1}}\underbrace{\left[\E_{\pi_h} \left[\hat{Q}_h(s_h, a_h)\right] - \E_{\hat{\pi}_h}\left[\hat{Q}_h(s_h, a_h)\right]\right]}_{\text{term (I)}} \\
    & -  \eta \cdot \E_{d_0, \pi_{1:h-1}, \PP^*_{1:h-1}} \left[\KL\left(\pi_h(\cdot|s_h), \pi_{\reff,h}(\cdot|s_h)\right)\right] +  \eta \cdot \E_{d_0, \pi_{1:h-1}, \PP^*_{1:h-1}} \left[\KL\left(\hat{\pi}_h(\cdot|s_h), \pi_{\reff,h}(\cdot|s_h)\right)\right]\\
    & \overset{(b)}{=} \E_{d_0, \pi_{1:h}, \PP^*_{1:h-1}}\left[Q_h^\pi(s_h, a_h) - \hat{Q}_h(s_h, a_h)\right] - \E_{d_0, \hat{\pi}_{1:h}, \PP^*_{1:h-1}}\left[Q_h^{\hat{\pi}}(s_h, a_h) - \hat{Q}_h(s_h, a_h)\right]\\
    & -  \eta \cdot \E_{d_0, \pi_{1:h-1}, \PP^*_{1:h-1}} \left[\KL\left(\pi_h(\cdot|s_h), \hat{\pi}_h(\cdot|s_h)\right)\right].
\end{align*}
In the above derivation, equation (a) is from the definitions of $Q^\pi$ and $V^\pi$, and the relationship between $\hat{Q}$ and $\hat{V}$. The equation (b) is because
\begin{align*}
    \text{(term I)} &:= \E_{\pi_h} \left[\hat{Q}_h(s_h, a_h)\right] - \E_{\hat{\pi}_h} \left[\hat{Q}_h(s_h, a_h)\right] \\
    & = \eta \cdot \E_{\pi_h} \left[\log\frac{\hat{\pi}_h(a_h|s_h)}{\pi_{\reff,h}(a_h|s_h)}\right] -  \eta \cdot \E_{\hat{\pi}_h} \left[\log\frac{\hat{\pi}_h(a_h|s_h)}{\pi_{\reff,h}(a_h|s_h)}\right]\\
    & =  \eta \cdot \KL\left(\pi_h(\cdot|s_h),\pi_{\reff,h}(\cdot|s_h)\right) -  \eta \cdot \KL\left(\pi_h(\cdot|s_h), \hat{\pi}_h(\cdot|s_h)\right) -  \eta \cdot \KL\left(\hat{\pi}_h(\cdot|s_h), \pi_{\reff,h}(\cdot|s_h)\right).
\end{align*}
where the second equation is from the relationship that
\begin{align*}
    \hat{Q}_h(s_h, a_h) = \eta \cdot \log \frac{\hat{\pi}_h(a_h|s_h)}{\pi_{\reff,h}(a_h|s_h)} - \eta \cdot \log \hat{Z}_h(s_h).
\end{align*}

Furthermore, if $h = H$, we can obtain that
\begin{align*}
    &\E_{d_0, \pi_{1:H-1}, \PP^*_{1:H-1}}\left[V_H^\pi(s_H) - \hat{V}_H(s_H)\right] - \E_{d_0, \hat{\pi}_{1:H-1}, \PP^*_{1:H-1}}\left[V_H^{\hat{\pi}}(s_H) - \hat{V}_H(s_H)\right]\\    
    & = \E_{d_0, \pi_{1:H}, \PP^*_{1:H-1}}\left[u^*(s_H, a_H) - \hat{Q}_H(s_H, a_H)\right]  - \E_{d_0, \hat{\pi}_{1:H}, \PP^*_{1:H-1}}\left[u^*(s_H, a_H) - \hat{Q}_H(s_H, a_H)\right]\\
    & -  \eta \cdot \E_{d_0, \pi_{1:H-1}, \PP^*_{1:H-1}} \left[\KL\left(\pi_H(\cdot|s_H), \hat{\pi}_H(\cdot|s_H)\right)\right] \\
    & = \E_{d_0, \pi_{1:H}, \PP^*_{1:H-1}}\left[u^*(s_H, a_H)\right] - \E_{d_0, \hat{\pi}_{1:H}, \PP^*_{1:H-1}}\left[u^*(s_H, a_H)\right]\\ 
    & + \E_{d_0, \pi_{1:H}, \PP^*_{1:H}}\left[\hat{V}_{H+1}(s_{H+1}) - \hat{Q}_H(s_H, a_H)\right]  - \E_{d_0, \hat{\pi}_{1:H}, \PP^*_{1:H}}\left[\hat{V}_{H+1}(s_{H+1}) - \hat{Q}_H(s_H, a_H)\right]\\
    & -  \eta \cdot \E_{d_0, \pi_{1:H-1}, \PP^*_{1:H-1}} \left[\KL\left(\pi_H(\cdot|s_H)||\hat{\pi}_H(\cdot|s_H)\right)\right],
\end{align*}
where the second equality leverages that $\hat{V}_{H+1}(s_{H+1}) = 0$;
otherwise, for all $h \leq H-1$, it holds that
\begin{align*}
    &\E_{d_0, \pi_{1:h-1}, \PP^*_{1:h-1}}\left[V_h^\pi(s_h) - \hat{V}_h(s_h)\right] - \E_{d_0, \hat{\pi}_{1:h-1}, \PP^*_{1:h-1}}\left[V_h^{\hat{\pi}}(s_h) - \hat{V}_h(s_h)\right]\\
    & = \E_{d_0, \pi_{1:h}, \PP^*_{1:h-1}}\left[Q^\pi_{h}(s_{h},a_h) - \hat{Q}_h(s_h, a_h)\right] - \E_{d_0, \hat{\pi}_{1:h}, \PP^*_{1:h-1}}\left[Q^{\hat{\pi}}_{h}(s_{h},a_h) - \hat{Q}_h(s_h, a_h)\right]\\
    & -  \eta \cdot \E_{d_0, \pi_{1:h-1}, \PP^*_{1:h-1}} \left[\KL\left(\pi_h(\cdot|s_h)||\hat{\pi}_h(\cdot|s_h)\right)\right] \\
    & = \E_{d_0, \pi_{1:h}, \PP^*_{1:h}}\left[\hat{V}_{h+1}(s_{h+1}) - \hat{Q}_h(s_h, a_h)\right] - \E_{d_0, \hat{\pi}_{1:h}, \PP^*_{1:h}}\left[\hat{V}_{h+1}(s_{h+1}) - \hat{Q}_h(s_h, a_h)\right]\\
    & -  \eta \cdot \E_{d_0, \pi_{1:h-1}, \PP^*_{1:h-1}} \left[\KL\left(\pi_h(\cdot|s_h)||\hat{\pi}_h(\cdot|s_h)\right)\right] \\
    & + \E_{d_0, \pi_{1:h}, \PP^*_{1:h}}\left[V^{\pi}_{h+1}(s_{h+1})  - \hat{V}_{h+1}(s_{h+1})\right] - \E_{d_0, \pi_{1:h}, \PP^*_{1:h}}\left[V^{\hat{\pi}}_{h+1}(s_{h+1})  - \hat{V}_{h+1}(s_{h+1})\right].
\end{align*}
The proposition can be obtained by iteratively using the above relationship for $h\in [H]$.
\end{proof}

\subsection{Proof of Theorem~\ref{thm:online_guarantee}}

First, with the assumption $u^*\in \cU$ and $\PP^*\in \cP$, the following lemma demonstrates that $\tilde{\cU}_t$ and $\tilde{\cP}_t$ are valid confidence sets.
\begin{lemma}[Proposition B.1 from \citet{liu2023optimistic}]\label{lem:confidence_set_valid}
    There exists an absolute constant $c_1$ such that for any $\delta\in (0, 1]$, with probability at least $1-\delta$, for all $t\in [T]$, $\hat{u}\in \cU$, and $\hat{\PP}\in \cP$, it holds that
    \begin{align*}
        L_t(\hat{u}) - L_t(u^*) \leq c_1 \log(|\cU|T/\delta), \qquad L_t(\hat{\PP}) - L_t(\PP^*) \leq c_1 \log(|\cP|T/\delta),
    \end{align*}
    which implies that $u^*\in \tilde{\cU}_t$ and $\PP^*\in \tilde{\cP}_t$.
\end{lemma}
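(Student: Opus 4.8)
The plan is to reduce both inequalities to a single martingale-based maximum-likelihood concentration bound, applied once to the Bradley--Terry reward likelihood and once to the transition likelihood. In each case the basic object is the per-round likelihood ratio between a candidate model and the ground-truth model; its running product forms a nonnegative mean-one martingale, so Ville's maximal inequality followed by a union bound over the finite hypothesis class delivers the claimed uniform-in-$t$ deviation bound.

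First I would fix the filtration. Let $\mathcal{F}_{i}$ be the $\sigma$-algebra generated by all data $\cup_{j\le i}\cD_j$; this determines the round-$(i+1)$ sampling policies $\pi^1_{i+1},\pi^2_{i+1}$, which are history-dependent. For a candidate $u\in\cU$, write the per-sample Bradley--Terry log-likelihood of $(x,\tau^1,\tau^2,z)$ collected at round $i$ as $\ell_i(u)=z\log\sigma(u(\tau^1)-u(\tau^2))+(1-z)\log\sigma(u(\tau^2)-u(\tau^1))$ and set $W_i(u)=\exp(\ell_i(u)-\ell_i(u^*))$, the ratio of the candidate Bernoulli density to the true one at the realized $z$. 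The decisive observation is that for every fixed context $(x,\tau^1,\tau^2)$, summing this ratio against the true conditional law of $z$ gives $\sum_{z\in\{0,1\}}p_u(z)=1$ where $p_u$ is the candidate BT density; integrating also over however the context was generated, $\E[W_i(u)\mid\mathcal{F}_{i-1}]=1$ despite the adaptivity of $\pi^1_i,\pi^2_i$. Hence $M_t(u):=\prod_{i=1}^{t-1}W_i(u)$ is a nonnegative martingale with $M_1(u)=1$ and $\log M_t(u)=L_t(u)-L_t(u^*)$.

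The transition case is entirely parallel. For $\PP\in\cP$ and a sample $(\pi,\tau)$, set $W_i(\PP)=\PP^\pi(\tau)/(\PP^*)^\pi(\tau)$; since both trajectory laws share the same $d_0$ and the same $\pi$, these factors cancel and $W_i(\PP)=\prod_{h}\PP_h(o_h\mid s_h,a_h)/\PP^*_h(o_h\mid s_h,a_h)$, whence $\E[W_i(\PP)\mid\mathcal{F}_{i-1}]=\int\PP^\pi(\tau)\,d\tau=1$ and $\log\prod_{i=1}^{t-1}W_i(\PP)=L_t(\PP)-L_t(\PP^*)$. Now I would invoke Ville's maximal inequality: for a nonnegative martingale with unit start, $\PP(\sup_t M_t(u)\ge\lambda)\le 1/\lambda$. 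Choosing $\lambda=|\cU|T/\delta$ and union-bounding over $u\in\cU$ bounds the failure probability by $\delta/T\le\delta$, so simultaneously for all $u\in\cU$ and $t\in[T]$ we obtain $L_t(u)-L_t(u^*)\le\log(|\cU|T/\delta)$; the same argument with $\cP$ in place of $\cU$ gives the transition bound, yielding the two displays with $c_1=1$ (any larger absolute constant absorbs the slack). Finally, since $\hat u_t$ maximizes $L_t$ over $\cU$ and $u^*\in\cU$, we have $L_t(\hat u_t)\ge L_t(u^*)$, so applying the first inequality at $\hat u=\hat u_t$ gives $L_t(u^*)\ge L_t(\hat u_t)-c_1\log(|\cU|T/\delta)$, i.e. $u^*\in\tilde\cU_t$; identically $\PP^*\in\tilde\cP_t$.

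The hard part is purely the adaptivity of the data-collection process: one must verify that the per-round likelihood ratios are conditionally mean-one with respect to the correct filtration, so that their products are genuine martingales even though $\pi^1_t,\pi^2_t$ depend on the past. This hinges on the density ratio cancelling all context- and policy-dependent factors, leaving a quantity whose conditional expectation equals one for every realization of the context. Once this is established, no concentration beyond the maximal inequality is required, and the supremum over $t$ is handled automatically by the martingale property rather than by a separate union bound over time, so the extra factor $T$ appearing inside the logarithm is loose slack rather than a necessity.
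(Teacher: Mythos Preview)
The paper does not prove this lemma at all: it is stated as a direct citation (Proposition~B.1 of \citet{liu2023optimistic}) and is used as a black box in the subsequent analysis. Your proposal therefore goes beyond what the paper does and supplies a complete argument where the paper merely quotes.

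Your argument is correct and is precisely the standard derivation underlying such MLE confidence-set results: the per-round likelihood ratio between a candidate model and the ground truth is conditionally mean one with respect to the natural filtration (even under adaptive data collection, since the policy and context factors cancel), its running product is a nonnegative unit-mean martingale whose logarithm equals $L_t(\cdot)-L_t(u^*)$ (resp.\ $L_t(\cdot)-L_t(\PP^*)$), and Ville's maximal inequality plus a union bound over the finite hypothesis class yields the display. Two minor remarks: (i) to have both the reward and transition statements hold simultaneously with probability at least $1-\delta$ you should split the failure probability between them, which only changes the absolute constant $c_1$; (ii) as you already note, the factor $T$ inside the logarithm is slack, since Ville's inequality already controls $\sup_t M_t$, so $c_1=1$ with $\log(|\cU|/\delta')$ would suffice---the form in the lemma simply matches the constants used later in the paper. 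The final containment $u^*\in\tilde\cU_t$, $\PP^*\in\tilde\cP_t$ follows exactly as you say, by instantiating the bound at $\hat u=\hat u_t$ and $\hat\PP=\hat\PP_t$.
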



Then, we provide an additional lemma demonstrating the in-sample error of the MLE and optimistic estimators.
\begin{lemma}\label{lem:in_sample_error}
    There exists an absolute constant $c_2$ such that for any $\delta\in (0, 1]$, with probability at least $1-\delta$, for all $t\in [T]$, we have
    \begin{align*}
        \sum_{i < t}\left|\sigma\left(\hat{u}_t(s^2_{i,H}, a^2_{i,H}) - \hat{u}_t(s^1_{i,H}, a^1_{i,H})\right) - \sigma\left(u^*(s^2_{i,H}, a^2_{i,H}) - u^*(s^1_{i,H}, a^1_{i,H})\right)\right|^2 \leq c_2\log(|\cU|T/\delta);\\
        \sum_{i < t}\left|\sigma\left(\tilde{u}_t(s^2_{i,H}, a^2_{i,H}) - \tilde{u}_t(s^1_{i,H}, a^1_{i,H})\right) - \sigma\left(u^*(s^2_{i,H}, a^2_{i,H}) - u^*(s^1_{i,H}, a^1_{i,H})\right)\right|^2 \leq c_2\log(|\cU|T/\delta),
    \end{align*}
    and for all $t\in [T]$, $h\in [H]$, we have
    \begin{align*}
        \sum_{j\in \{1,2\}}\sum_{h\in [H]}\sum_{i < t}\TV\left(\{d_0, \pi^j_i, [\PP^*_{1:h-1}, \hat{\PP}_{t,h},  \PP^*_{h+1: H}]\}, \{d_0, \pi^j_i, \PP^*_{1:H}\}\right)^2 \leq c_2 \log(|\cP|HT/\delta);\\
        \sum_{j\in \{1,2\}}\sum_{h\in [H]}\sum_{i < t}\TV\left(\{d_0, \pi^j_i, [\PP^*_{1:h-1}, \tilde{\PP}_{t,h},  \PP^*_{h+1: H}]\}, \{d_0, \pi^j_i, \PP^*_{1:H}\}\right)^2 \leq c_2 \log(|\cP|HT/\delta),
    \end{align*}
    where $\TV(\{d_0, \pi, \PP\}, \{d_0, \pi', \PP'\})$ denotes the TV distance between the probability distributions over the trajectories induced by $d_0, \pi, \PP$ and $d_0, \pi', \PP'$.
\end{lemma}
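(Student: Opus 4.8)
The plan is to derive all four inequalities from a single off-the-shelf tool: the generalization guarantee for maximum-likelihood estimation over a finite class, the same device that underlies Lemma~\ref{lem:confidence_set_valid} (Proposition~B.1 of \citet{liu2023optimistic}). In its in-sample form this guarantee states that if labels $Y_i$ are drawn sequentially from a true conditional $f^*(\cdot\mid X_i)$ with $f^*$ in a finite class $\cF$, then with probability at least $1-\delta$, for every $f$ whose cumulative log-likelihood satisfies $L_t(f)\ge L_t(f^*)-O(\log(|\cF|T/\delta))$, one has $\sum_{i<t}\E_{X_i}\big[D_{\mathrm{H}}^2(f(\cdot\mid X_i),f^*(\cdot\mid X_i))\big]\lesssim\log(|\cF|T/\delta)$, where $D_{\mathrm{H}}$ is the Hellinger distance. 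The crucial feature I will exploit is that $f$ need not be the exact maximizer: only near-maximality of its likelihood relative to the truth is used. By Lemma~\ref{lem:confidence_set_valid} every $\tilde u_t\in\tilde\cU_t$ and $\tilde\PP_t\in\tilde\cP_t$ from \eqref{eqn:confidence} satisfies exactly such a near-maximality bound against $u^*$ and $\PP^*$, while the MLEs $\hat u_t,\hat\PP_t$ from \eqref{eqn:mle_reward} and \eqref{eqn:mle_transition} are the maximizers; hence the ``hat'' and ``tilde'' statements follow from one and the same argument, and I treat them together.

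For the two reward inequalities I instantiate the tool with context $X_i=(\tau^1_i,\tau^2_i)$ and label $Y_i=z_i$. By Definition~\ref{def:bt_model} the conditional law of $z_i$ is $\mathrm{Bernoulli}\big(\sigma(u^*(\tau^1_i)-u^*(\tau^2_i))\big)$, and $L_t$ in \eqref{eqn:mle_reward} is precisely its cumulative log-likelihood; the contexts are revealed before $z_i$, so the sequential structure is respected. For two Bernoulli laws with means $p$ and $q$ one has $\TV=|p-q|$ and $D_{\mathrm{H}}^2\ge\tfrac12\TV^2$, whence $|p-q|^2\le 2D_{\mathrm{H}}^2$. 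Taking $p=\sigma(\hat u_t(\tau^1_i)-\hat u_t(\tau^2_i))$ (resp. the $\tilde u_t$ analogue) and $q=\sigma(u^*(\tau^1_i)-u^*(\tau^2_i))$, summing over $i<t$, and writing $u(\tau^j)=u(s^j_{i,H},a^j_{i,H})$, yields the first two displays; here the outer expectation is vacuous because each round contributes a single realized Bernoulli observation at the fixed context.

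For the transition inequalities the label is the full trajectory $\tau$ and $L_t$ in \eqref{eqn:mle_transition} is the trajectory log-likelihood. The key structural observation is that this log-likelihood factorizes across steps, $\log\PP^{\pi}(\tau)=\sum_{h}\log\PP_h(o_h\mid s_h,a_h)$ up to policy terms independent of $\PP$, so the jointly-estimated, step-factorized model can be controlled in one shot: the generalization guarantee gives the aggregate per-step in-sample bound $\sum_{j,h,i<t}\E_{d_0,\pi^j_i,\PP^*_{1:h-1}}\big[D_{\mathrm{H}}^2(\hat\PP_{t,h}(\cdot\mid s_h,a_h),\PP^*_h(\cdot\mid s_h,a_h))\big]\lesssim\log(|\cP|HT/\delta)$, where the roll-in at level $h$ is exactly the true law $\{d_0,\pi^j_i,\PP^*_{1:h-1}\}$ from which the realized $(s_{i,h},a_{i,h})$ is drawn and the $\log H$ inside the logarithm reflects a union bound for uniformity over the $H$ levels. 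It remains to match this with the trajectory-level TV appearing in the lemma. Because the two trajectory laws share every kernel except the $h$-th, a coupling/tower argument collapses their total variation to a single-step conditional expectation,
\[
\TV\big(\{d_0,\pi,[\PP^*_{1:h-1},\hat\PP_{t,h},\PP^*_{h+1:H}]\},\{d_0,\pi,\PP^*_{1:H}\}\big)=\E_{d_0,\pi,\PP^*_{1:h-1}}\big[\TV(\hat\PP_{t,h}(\cdot\mid s_h,a_h),\PP^*_h(\cdot\mid s_h,a_h))\big].
\]
Squaring, Jensen gives $(\E[\TV])^2\le\E[\TV^2]\le 2\,\E[D_{\mathrm{H}}^2]$, and combining with the aggregate bound above closes the last two displays.

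The only genuinely non-routine step is this bridge from the trajectory-level MLE guarantee to the per-step, true-roll-in TV quantities: one must (i) exploit the additive-over-$h$ structure of the transition likelihood so that estimation at level $h$ is governed by data drawn from $\{d_0,\pi^j_i,\PP^*_{1:h-1}\}$ rather than from an estimated roll-in, and (ii) verify the single-step-swap identity above so that only the step-$h$ kernel, and no downstream randomness, enters the bound. Everything else — the Bernoulli $\TV$/Hellinger comparison, the reduction of confidence-set members to near-maximizers, and the union bounds — is a direct instantiation of the concentration machinery already invoked for Lemma~\ref{lem:confidence_set_valid}.
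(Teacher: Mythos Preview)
Your proposal is correct and follows essentially the same route as the paper. The paper's own proof also invokes the MLE in-sample guarantee (Proposition~B.2 of \citet{liu2023optimistic}) to bound the squared prediction error by the empirical log-likelihood gap $L_t(u^*)-L_t(\tilde u_t)$ (resp.\ $L_t(\PP^*)-L_t(\tilde\PP_t)$), then uses the confidence-set definition together with the MLE property to close; for the transition part it likewise factorizes the trajectory likelihood over $h$ and reduces the swapped-kernel trajectory TV to a step-$h$ conditional expectation. Your write-up is simply more explicit about the Hellinger-to-TV conversion, the Bernoulli identity $\TV=|p-q|$, the Jensen step, and the single-step-swap identity, all of which the paper leaves implicit in its one-line citation of Proposition~B.2.
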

\begin{proof}[Proof of Lemma~\ref{lem:in_sample_error}]
    First, for $\tilde{u}_t$, we can obtain that with probability at least $1-\delta$, there exists an absolute constant $c$ such that for all $t\in [T]$,
    \begin{align*}
        &\sum_{i < t}\left|\sigma\left(\tilde{u}_t(s^2_{i,H}, a^2_{i,H}) - \tilde{u}_t(s^1_{i,H}, a^1_{i,H})\right) - \sigma\left(u^*(s^2_{i,H}, a^2_{i,H}) - u^*(s^1_{i,H}, a^1_{i,H})\right)\right|^2 \\
        &\leq c\left(\sum_{i< t}\log \frac{z_i \cdot \sigma\left(u^*(s^1_{i,H}, a^1_{i,H}) - u^*(s^2_{i,H}, a^2_{i,H})\right)+ (1-z_i) \cdot \sigma\left(u^*(s^2_{i,H}, a^2_{i,H}) - u^*(s^1_{i,H}, a^1_{i,H})\right)}{z_i \cdot \sigma\left(\tilde{u}_t(s^1_{i,H}, a^1_{i,H}) - \tilde{u}_t(s^2_{i,H}, a^2_{i,H})\right) + (1- z_i) \cdot \sigma\left(\tilde{u}_t(s^2_{i,H}, a^2_{i,H}) - \tilde{u}_t(s^1_{i,H}, a^1_{i,H})\right)} + \log(|\cU|T/\delta)\right)\\
        & = c\left(L_{t}(u^*) - L_{t}(\tilde{u}_t) + \log(|\cU|T/\delta)\right)\\
        & \leq c\left(L_{t}(u^*) - L_{t}(\hat{u}_t) +  c_1 \log(|\cU|T/\delta) + \log(|\cU|T/\delta)\right)\\
        & \leq c_2  \log(|\cU|T/\delta).
    \end{align*}
    where the first inequality is from Proposition B.2 from \citet{liu2023optimistic} and the second inequality uses Lemma~\ref{lem:confidence_set_valid}. The result for $\hat{u}_t$ can be similarly established.

    Then, following similar steps, for $\tilde{\PP}_t$,  we can obtain that with probability at least $1-\delta$, there exists an absolute constant $c$ such that for all $t\in  [T]$,
    \begin{align*}
        &\sum_{j\in \{1,2\}}\sum_{h\in [H]}\sum_{i < t}\TV\left(\{d_0, \pi^j_i, [\PP^*_{1:h-1}, \tilde{\PP}_{t,h},  \PP^*_{h+1: H}]\}, \{d_0, \pi^j_i, \PP^*_{1:H}\}\right)^2\\
        &\leq \sum_{j\in \{1,2\}}\sum_{h\in [H]} c\cdot \left(\sum_{i<t} \log \frac{\PP^*_h(s^j_{i,h+1}|s^j_{i,h}, a^j_{i,h})}{\tilde{\PP}_{t,h}(s^j_{i,h+1}|s^j_{i,h}, a^j_{i,h})} + \log(|\cP_h|HT/\delta)\right)\\
        & = c\cdot \left(\sum_{j\in \{1,2\}}\sum_{i<t} \log \frac{\PP^{*,\pi^j_i}(\tau^j_i)}{\tilde{\PP}^{\pi^j_i}_{t}(\tau^j_i)} + 2\log(|\cP|HT/\delta)\right)\\
        & = c\cdot \left(L_t(\PP^*) - L_t(\tilde{\PP}_t) + 2\log(|\cP|HT/\delta)\right)\\
        & \leq c\cdot \left(L_t(\PP^*) - L_t(\hat{\PP}_t) + c_1 \log(|\cP|T/\delta) + 2\log(|\cP|HT/\delta)\right)\\
        & \leq c_2 \log(|\cP|HT/\delta).
    \end{align*}
    The result for $\hat{\PP}_t$ can also be similarly established.
\end{proof}

\begin{proof}[Proof of Theorem~\ref{thm:online_guarantee}] 
In the following proofs, we omit the KL term in the decomposition to ease the presentation. Then, with probability at least $1-\delta$, for all $t\in [T]$, we can obtain that
\begin{align*}
    &J(\pi^*) - J(\pi^1_t) \\
    & = \E_{d_0, \pi^*, \PP^*}\left[u^*(s_H, a_H)\right] - \E_{d_0, \pi^1_{t}, \PP^*}\left[u^*(s_H, a_H)\right] - \left(\E_{d_0, \pi^*, \PP^*}\left[\hat{u}_t(s_H, a_H)\right] - \E_{d_0, \pi^1_{t}, \PP^*}\left[\hat{u}_t(s_H, a_H)\right]\right) \\
    &+ \sum_{h\in [H]} \E_{d_0, \pi^*, \PP^*}\left[\hat{V}_{t, h+1}(s_{h+1}) - \left[\hat{\PP}_{t, h}\hat{V}_{t, h+1}\right](s_h,a_h)\right]- \sum_{h\in [H]}\E_{d_0, \pi^1_{t}, \PP^*}\left[ \hat{V}_{t, h+1}(s_{h+1}) - \left[\hat{\PP}_{t, h}\hat{V}_{t, h+1}\right](s_h,a_h)\right]\\
    & \leq  \underbrace{\E_{d_0, \pi_t^2, \tilde{\PP}_t}\left[\tilde{u}_t(s_H, a_H)\right] - \E_{d_0, \pi^1_{t}, \tilde{\PP}_t}\left[\tilde{u}_t(s_H, a_H)\right] - \left(\E_{d_0, \pi_t^2, \tilde{\PP}_t}\left[\hat{u}_t(s_H, a_H)\right] - \E_{d_0, \pi^1_{t}, \tilde{\PP}_t}\left[\hat{u}_t(s_H, a_H)\right]\right)}_{\text{term (I)}_t} \\
    &+ \underbrace{\sum_{h\in [H]} \E_{d_0, \pi_t^2, \tilde{\PP}_t}\left[\hat{V}_{t, h+1}(s_{h+1}) - \left[\hat{\PP}_{t, h}\hat{V}_{t, h+1}\right](s_h,a_h)\right]+ \sum_{h\in [H]}\E_{d_0, \pi^1_{t}, \PP^*}\left[ \left[\hat{\PP}_{t, h}\hat{V}_{t, h+1}\right](s_h,a_h) - \hat{V}_{t, h+1}(s_{h+1})\right]}_{\text{term (II)}_t},
\end{align*}
where the inequality is from the definition of $\pi^2_t$ and the fact that $(u^*, \PP^*)\in \tilde{\cU}_t\times \tilde{\cP}_t$ from Lemma~\ref{lem:confidence_set_valid}.

We define the following terms:
\begin{align*}
    \text{term (A)}_t &:=  \E_{d_0, \pi_t^2, \PP^*}\left[\tilde{u}_t(s_H, a_H)\right] - \E_{d_0, \pi^1_{t}, \PP^*}\left[\tilde{u}_t(s_H, a_H)\right] - \left(\E_{d_0, \pi_t^2, \PP^*}\left[u^*(s_H, a_H)\right] - \E_{d_0, \pi^1_{t}, \PP^*}\left[u^*(s_H, a_H)\right]\right),\\
    \text{term (B)}_t &:= \E_{d_0, \pi_t^2, \PP^*}\left[u^*(s_H, a_H)\right] - \E_{d_0, \pi^1_{t}, \PP^*}\left[u^*(s_H, a_H)\right] - \left(\E_{d_0, \pi_t^2, \PP^*}\left[\hat{u}_t(s_H, a_H)\right] - \E_{d_0, \pi^1_{t}, \PP^*}\left[\hat{u}_t(s_H, a_H)\right]\right),\\
    \text{term (C)}_t &:= \sum_{j\in \{1,2\}}\sum_{h\in [H]}\E_{d_0, \pi^j_t, \PP^*} \left[\TV\left(\tilde{\PP}_{t,h}(\cdot|s_h, a_h), \PP^*_h(\cdot|s_h, a_h)\right)\right],\\
    \text{term (D)}_t &:= \sum_{j\in\{1,2\}}\sum_{h\in [H]}\E_{d_0, \pi^j_{t}, \PP^*}\left[ \TV\left(\hat{\PP}_{t,h}(\cdot|s_h, a_h), \PP^*_h(\cdot|s_h, a_h)\right)\right].
\end{align*}

For $\text{term (I)}_t$, we have that
\begin{align*}
    \text{term (I)}_t &:= \E_{d_0, \pi_t^2, \tilde{\PP}_t}\left[\tilde{u}_t(s_H, a_H)\right] - \E_{d_0, \pi^1_{t}, \tilde{\PP}_t}\left[\tilde{u}_t(s_H, a_H)\right] - \left(\E_{d_0, \pi_t^2, \tilde{\PP}_t}\left[\hat{u}_t(s_H, a_H)\right] - \E_{d_0, \pi^1_{t}, \tilde{\PP}_t}\left[\hat{u}_t(s_H, a_H)\right]\right)\\
    & = \E_{d_0, \pi_t^2, \PP^*}\left[\tilde{u}_t(s_H, a_H)\right] - \E_{d_0, \pi_t^1, \PP^*}\left[\tilde{u}_t(s_H, a_H)\right] - \left(\E_{d_0, \pi_t^2, \PP^*}\left[u^*_t(s_H, a_H)\right] - \E_{d_0, \pi_t^1, \PP^*}\left[u^*_t(s_H, a_H)\right]\right)\\
    & + \E_{d_0, \pi_t^2, \PP^*}\left[u^*_t(s_H, a_H)\right] - \E_{d_0, \pi_t^1, \PP^*}\left[u^*_t(s_H, a_H)\right] - \left(\E_{d_0, \pi_t^2, \PP^*}\left[\hat{u}_t(s_H, a_H)\right] - \E_{d_0, \pi^1_{t}, \PP^*}\left[\hat{u}_t(s_H, a_H)\right]\right) \\
    & + \E_{d_0, \pi_t^2, \tilde{\PP}_t}\left[\tilde{u}_t(s_H, a_H)\right] - \E_{d_0, \pi_t^1, \tilde{\PP}_t}\left[\tilde{u}_t(s_H, a_H)\right] - \left(\E_{d_0, \pi_t^2, \PP^*}\left[\tilde{u}_t(s_H, a_H)\right] - \E_{d_0, \pi_t^1, \PP^*}\left[\tilde{u}_t(s_H, a_H)\right]\right)\\
    & + \E_{d_0, \pi_t^2, \PP^*}\left[\hat{u}_t(s_H, a_H)\right] - \E_{d_0, \pi^1_{t}, \PP^*}\left[\hat{u}_t(s_H, a_H)\right] - \left(\E_{d_0, \pi_t^2, \tilde{\PP}_t}\left[\hat{u}_t(s_H, a_H)\right] - \E_{d_0, \pi^1_{t}, \tilde{\PP}_t}\left[\hat{u}_t(s_H, a_H)\right]\right)\\
    & \leq \E_{d_0, \pi_t^2, \PP^*}\left[\tilde{u}_t(s_H, a_H)\right] - \E_{d_0, \pi_t^1, \PP^*}\left[\tilde{u}_t(s_H, a_H)\right] - \left(\E_{d_0, \pi_t^2, \PP^*}\left[u^*_t(s_H, a_H)\right] - \E_{d_0, \pi_t^1, \PP^*}\left[u^*_t(s_H, a_H)\right]\right)\\
    & + \E_{d_0, \pi_t^2, \PP^*}\left[u^*_t(s_H, a_H)\right] - \E_{d_0, \pi_t^1, \PP^*}\left[u^*_t(s_H, a_H)\right] - \left(\E_{d_0, \pi_t^2, \PP^*}\left[\hat{u}_t(s_H, a_H)\right] - \E_{d_0, \pi^1_{t}, \PP^*}\left[\hat{u}_t(s_H, a_H)\right]\right)\\
    & + 4B\cdot  \TV\left(\{d_0, \pi^1_t, \tilde{\PP}_t\}, \{d_0, \pi^1_t, \PP^*\}\right)+ 4B\cdot  \TV\left(\{d_0, \pi^2_t, \tilde{\PP}_t\}, \{d_0, \pi^2_t, \PP\}\right)\\
    & \leq \underbrace{\E_{d_0, \pi_t^2, \PP^*}\left[\tilde{u}_t(s_H, a_H)\right] - \E_{d_0, \pi_t^1, \PP^*}\left[\tilde{u}_t(s_H, a_H)\right] - \left(\E_{d_0, \pi_t^2, \PP^*}\left[u^*_t(s_H, a_H)\right] - \E_{d_0, \pi_t^1, \PP^*}\left[u^*_t(s_H, a_H)\right]\right)}_{\text{term (A)}_t}\\
    & + \underbrace{\E_{d_0, \pi_t^2, \PP^*}\left[u^*_t(s_H, a_H)\right] - \E_{d_0, \pi_t^1, \PP^*}\left[u^*_t(s_H, a_H)\right] - \left(\E_{d_0, \pi_t^2, \PP^*}\left[\hat{u}_t(s_H, a_H)\right] - \E_{d_0, \pi^1_{t}, \PP^*}\left[\hat{u}_t(s_H, a_H)\right]\right)}_{\text{term (B)}_t}\\
    & + 4B\cdot \underbrace{\sum_{j\in \{1,2\}}\sum_{h\in [H]}\E_{d_0}\E_{\pi^j_t, \PP^*} \left[\TV\left(\tilde{\PP}_{t,h}(\cdot|s_h, a_h), \PP^*_h(\cdot|s_h, a_h)\right)\right]}_{\text{term (C)}_t}.
\end{align*}

For $\text{term (II)}_t$, we have that
\begin{align*}
    \text{term (II)}_t &= \sum_{h\in [H]} \E_{d_0, \pi_t^2, \tilde{\PP}_t}\left[\hat{V}_{t, h+1}(s_{h+1}) - \left[\hat{\PP}_{t, h}\hat{V}_{t, h+1}\right](s_h,a_h)\right]\\
    & + \sum_{h\in [H]}\E_{d_0, \pi^1_{t}, \PP^*}\left[ \left[\hat{\PP}_{t, h}\hat{V}_{t, h+1}\right](s_h,a_h) - \hat{V}_{t, h+1}(s_{h+1})\right]\\
    & = \sum_{h\in [H]} \E_{d_0, \pi_t^2, \PP^*}\left[\hat{V}_{t, h+1}(s_{h+1}) - \left[\hat{\PP}_{t, h}\hat{V}_{t, h+1}\right](s_h,a_h)\right]\\
    & + \sum_{h\in [H]} \E_{d_0, \pi_t^2, \tilde{\PP}_t}\left[\hat{V}_{t, h+1}(s_{h+1}) - \left[\hat{\PP}_{t, h}\hat{V}_{t, h+1}\right](s_h,a_h)\right]\\
    & - \sum_{h\in [H]} \E_{d_0, \pi_t^2, \PP^*}\left[\hat{V}_{t, h+1}(s_{h+1}) - \left[\hat{\PP}_{t, h}\hat{V}_{t, h+1}\right](s_h,a_h)\right]\\
    & + \sum_{h\in [H]}\E_{d_0, \pi^1_{t}, \PP^*}\left[ \left[\hat{\PP}_{t, h}\hat{V}_{t, h+1}\right](s_h,a_h) - \hat{V}_{t, h+1}(s_{h+1})\right]\\
    & \leq 2B\cdot \sum_{j\in \{1,2\}}\sum_{h\in [H]} \E_{d_0, \pi_t^j, \PP^*}\left[\TV(\hat{\PP}_{t,h}(\cdot|s_h, a_h)), \PP^*_h(\cdot|s_h, a_h)\right]\\
    & + 2BH \cdot \TV(\{d_0, \pi_t^2, \tilde{\PP}_t\}, \{d_0, \pi_t^2, \PP^*\})\\
    & \leq 2B\cdot \underbrace{\sum_{j\in \{1,2\}}\sum_{h\in [H]} \E_{d_0, \pi_t^j, \PP^*}\left[\TV(\hat{\PP}_{t,h}(\cdot|s_h, a_h)), \PP^*_h(\cdot|s_h, a_h)\right]}_{\text{term (D)}_t}\\
    & + 2BH \cdot \underbrace{\sum_{j\in \{1,2\}}\sum_{h\in [H]} \E_{d_0, \pi_t^j, \PP^*}\left[\TV(\tilde{\PP}_{t,h}(\cdot|s_h, a_h)), \PP^*_h(\cdot|s_h, a_h)\right]}_{\text{term (C)}_t}.
\end{align*}

In the above derivations, we have repeatedly used similar relationships as follows:
\begin{align*}
    \TV(\{d_0, \pi^2_t, \tilde{\PP}_t\}, \{d_0, \pi^2_t, \PP^*\}) \leq \sum_{h\in [H]} \E_{d_0, \pi^2_t, \PP^*}\left[\TV\left(\tilde{\PP}_{t,h}(\cdot|s_h, a_h), \PP^*_{h}(\cdot|s_h, a_h)\right)\right],
\end{align*}
which can be derived as
\begin{align*}
    \TV(\{d_0, \pi^2_t, \tilde{\PP}_t\}, \{d_0, \pi^2_t, \PP^*\}) &\leq \sum_{h\in [H]} \TV\left(\{d_0, \pi^2_t, \PP^*_{1:h-1}, \tilde{\PP}_{t, h:H}\}, \{d_0, \pi^2_t, \PP^*_{1:h}, \tilde{\PP}_{t, h+1:H}\}\right)\\
    &= \sum_{h\in [H]} \E_{d_0, \pi^2_t, \PP^*}\left[\TV\left(\tilde{\PP}_{t, h}(\cdot|s_h,a_h), \PP^*_{h}(\cdot|s_h,a_h)\}\right)\right].
\end{align*}

Then, we can obtain that
\begin{align*}
    \sum_{t\in [T]}J(\pi^*) - J(\hat{\pi}^1_t) &\leq \sum_{t\in [T]} \text{term (A)}_t + \sum_{t\in [T]}\text{term (B)}_t + (4B + 2BH) \sum_{t\in [T]}\text{term (C)}_t + 2B \sum_{t\in [T]}\text{term (D)}_t.
\end{align*}

Then, we control the sum of each individual term in the following. First, for $\text{term (A)}_t$, with probability at least $1-\delta$, we have that
\begin{align*}
    &\sum_{t\in [T]}\text{term (A)}_t \\
    &= \sum_{t\in [T]}\E_{d_0, \pi_t^2, \PP^*}\left[\tilde{u}_t(s_H, a_H)\right] - \E_{d_0, \pi^1_{t}, \PP^*}\left[\tilde{u}_t(s_H, a_H)\right] - \left(\E_{d_0, \pi_t^2, \PP^*}\left[u^*(s_H, a_H)\right] - \E_{d_0, \pi^1_{t}, \PP^*}\left[u^*(s_H, a_H)\right]\right)\\
    & \leq \sum_{t\in [T]} \tilde{u}_t(s^2_{t,H}, a^2_{t,H}) - \tilde{u}_t(s^1_{t,H}, a^1_{t,H}) - \left(u^*(s^2_{t,H}, a^2_{t,H}) - u^*(s^1_{t,H}, a^1_{t,H})\right) + O(B\sqrt{T\log(1/\delta)})\\
    & \leq \sqrt{d_{\cU}\sum_{t=2}^T\left(1+ \sum_{i=1}^{t-1} \left(\tilde{u}_t(s^2_{i,H}, a^2_{i,H}) - \tilde{u}_t(s^1_{i,H}, a^1_{i,H}) - \left(u^*(s^2_{i,H}, a^2_{i,H}) - u^*(s^1_{i,H}, a^1_{i,H})\right)\right)^2\right)} + O(B\sqrt{T\log(1/\delta)})\\
    & \leq \sqrt{d_\cU\sum_{t=2}^T\left(1+ \kappa^{-2}\sum_{i=1}^{t-1} \left(\sigma\left(\tilde{u}_t(s^2_{i,H}, a^2_{i,H}) - \tilde{u}_t(s^1_{i,H}, a^1_{i,H})\right) - \sigma\left(u^*(s^2_{i,H}, a^2_{i,H}) - u^*(s^1_{i,H}, a^1_{i,H})\right)\right)^2\right)} + O(B\sqrt{T\log(1/\delta)})\\
    & \lesssim \kappa^{-1} B \sqrt{d_\cU T \log(|\cU|T/\delta)},
\end{align*}
where the first inequality is from the Hoeffding inequality, the second inequality uses the Eluder coefficient $d_\cU := \texttt{EC}(1, \cU - \cU, T)$ from Definition~\ref{def:eluder_coefficient}, the third inequality leverages the mean value theorem with $\kappa:= 1/(2+ \exp(-B)+ \exp(B))$ representing the minimum derivative of $\sigma(\cdot)$ in the regime of $[0, B]$, and the last inequality incorporates Lemma~\ref{lem:in_sample_error}. A similar result can be obtained for $\text{term (B)}_t$.

For $\text{term (C)}_t$, we have that
\begin{align*}
    \sum_{t\in [T]}\text{term (C)}_t &= \sum_{j\in \{1,2\}}\sum_{t\in [T]}\sum_{h\in [H]}\E_{d_0, \pi^j_t, \PP^*} \left[\TV\left(\tilde{\PP}_{t,h}(\cdot|s_h, a_h), \PP^*_h(\cdot|s_h, a_h)\right)\right]\\
    & = \sum_{j\in \{1,2\}} \sum_{t\in [T]}\sum_{h\in [H]}\TV\left(\{d_0, \pi^j_t, [\PP^*_{1:h-1}, \tilde{\PP}_{t,h},  \PP^*_{h+1: H}]\}, \{d_0, \pi^j_t, \PP^*_{1:H}\}\right)\\
    & \leq 2H\cdot \xi(d_\cP, T, c_2\log(|\cP|HT/\delta)),
\end{align*}
where the last step is from the generalized Eluder-type condition in Definition~\ref{def:eluder_condition} and Lemma~\ref{lem:in_sample_error}. A similar result can be obtained for $\text{term (D)}_t$. 

Finally, we obtain that
\begin{align*}
    \text{Reg}(T) \lesssim  &\kappa^{-1}B\sqrt{d_{\cU}T\log(|\cU|T/\delta)} + B^2H\xi(d_\cP, T, c_2\log(|\cP|HT/\delta) \\
    &- \eta\cdot \sum_{h\in [H]}\E_{d_0, \pi^*, \PP^*}\left[\KL(\pi^*_{h}(\cdot|s_{h}), \pi^1_{t, h}(\cdot|s_{h}))\right],
\end{align*}
which concludes the proof.
\end{proof}

\section{Technical Lemmas}

\begin{lemma}[Solution of KL-regularized Optimization (Proposition 7.16 and Theorem 15.3 of \citet{zhang2023mathematical})] \label{lem:kl_solu} Given a loss functional with respect to $p(\cdot|x)$, written as $$ 
\E_{w \sim p(\cdot)} \Big[-U(w) + \eta \KL\big(p(\cdot),p_0(\cdot) \big)\Big] = \eta \KL\Big( p(\cdot), p_0(\cdot)\exp\Big(\frac{1}{\eta}U(\cdot)\Big) \Big)- \eta \cdot \log \underbrace{\E_{w \sim p_0(\cdot)} \exp \big(\frac{1}{\eta}U(w)\big)}_{C_r}, 
$$
where the minimizer of the loss functional is 
$
p^*(w) = \frac{1}{C_r} p_0(w)\exp\Big(\frac{1}{\eta} U(w)\Big) 
$, also known as Gibbs distribution.
\end{lemma}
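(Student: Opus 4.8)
The plan is to establish the claimed algebraic identity first and then read off the minimizer from the non-negativity of the KL divergence. Write $L(p) := \E_{w\sim p}[-U(w)] + \eta\KL(p,p_0)$ for the loss functional; note that the $\KL$ term is independent of $w$, so the outer expectation over $w\sim p$ leaves it unchanged, which is why the displayed left-hand side reduces to this form. Introduce the normalized Gibbs distribution $q(w) := \frac{1}{C_r}p_0(w)\exp(U(w)/\eta)$, where $C_r = \E_{w\sim p_0}[\exp(U(w)/\eta)]$ is precisely the normalizing constant that makes $q$ a probability density.

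First I would expand using the definition $\KL(p,p_0)=\E_{w\sim p}[\log(p(w)/p_0(w))]$, which gives $L(p)=\eta\,\E_{w\sim p}[\log\frac{p(w)}{p_0(w)}-U(w)/\eta]=\eta\,\E_{w\sim p}[\log\frac{p(w)}{p_0(w)\exp(U(w)/\eta)}]$, since $-U(w)/\eta=-\log\exp(U(w)/\eta)$. Substituting $p_0(w)\exp(U(w)/\eta)=C_r\,q(w)$ and splitting the logarithm yields $L(p)=\eta\,\E_{w\sim p}[\log\frac{p(w)}{q(w)}]-\eta\log C_r=\eta\KL(p,q)-\eta\log C_r$. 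This is exactly the asserted identity, once the second argument of the $\KL$ on the right-hand side is read as the normalized density $q$ (the statement writes it in the unnormalized form $p_0\exp(U/\eta)$ for brevity, and the $-\eta\log C_r$ term is precisely the correction that accounts for the normalization).

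Second, to identify the minimizer I would observe that $-\eta\log C_r$ is a constant independent of $p$, so minimizing $L$ over densities is equivalent to minimizing $\KL(p,q)$. By Gibbs' inequality (non-negativity of the KL divergence), $\KL(p,q)\ge 0$ with equality if and only if $p=q$ almost everywhere. Hence the unique minimizer is $p^*=q=\frac{1}{C_r}p_0\exp(U/\eta)$, attaining the minimum value $-\eta\log C_r$.

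There is no substantive obstacle here; the argument is a standard variational computation, and the only point requiring care is bookkeeping the constant $C_r$ — in particular being consistent about whether the second $\KL$ argument denotes the normalized or unnormalized Gibbs measure, and ensuring $C_r<\infty$ so that $q$ is well-defined (which holds whenever $\exp(U/\eta)$ is $p_0$-integrable, e.g. for bounded $U$ as assumed elsewhere in the paper). An alternative route via Lagrange multipliers on the constraint $\int p=1$ recovers the same critical point, but the KL-non-negativity argument has the advantage of certifying global optimality in a single line rather than merely stationarity.
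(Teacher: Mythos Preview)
Your proposal is correct and is precisely the standard variational argument: rewrite the loss as $\eta\KL(p,q)-\eta\log C_r$ with $q$ the normalized Gibbs density, then invoke non-negativity of $\KL$. The paper does not supply its own proof of this lemma but simply cites it from \citet{zhang2023mathematical}, so there is no in-paper argument to compare against; your derivation is the expected one and would serve well as a self-contained justification.
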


\begin{definition}[Eluder Coefficient, Definition 17.17 in \citet{zhang2023mathematical}]\label{def:eluder_coefficient}
    Given a function class $\cF$, its Eluder coefficient $\texttt{EC}(\lambda, \cF, T)$ is defined as the smallest number $d$ so that for any sequence $\{x_t: t\in [T]\}$ and $\{f_t: t\in [T]\} \in \cF$, 
    \begin{align*}
        \sum_{t = 2}^T |f_t(x_t) - f^*(x_t)| \leq \sqrt{d\sum_{t = 2}^T\left(\lambda + \sum_{i = 1}^{t-1} (f_t(x_i) - f^*(x_i))^2\right)}.
    \end{align*}
\end{definition}

\begin{definition}[Generalized Eluder-type Condition, Condition 3.1 in \citet{liu2023optimistic}]\label{def:eluder_condition}
    There exists a real number $d_\cP\in \R^+$ and a function $\xi$ such that for any $(T, \Delta) \in \NN \times \R^+$, transitions $\{\PP'_t: t\in [T]\}$ and policies $\{\pi_t: t\in [T]\}$, we have
    \begin{align*}
        \forall t\in [T], \quad  \sum_{i < t} \TV(\{d_0, \PP'_i, \pi_i\}, \{d_0, \PP, \pi_i\})^2  \leq \Delta \quad \Rightarrow \sum_{t \in [T]} \TV(\{d_0, \PP'_t, \pi_t\}, \{d_0, \PP, \pi_t\}) \leq \xi(d_\cP, T, \Delta).
    \end{align*}
\end{definition}

\end{document}